%% file: main.tex
\documentclass[iicol,pdflatex,sn-mathphys-num]{sn-jnl}
\usepackage{color}
\usepackage{makecell}
\usepackage{amsmath,amsfonts,amssymb}
\usepackage{mathtools}
\usepackage{pifont}
\usepackage{booktabs}       
\usepackage{graphicx}
\usepackage{multirow}
\usepackage{lineno} 
\linenumbers
\def\etal{\textit{et al. }}

\theoremstyle{thmstyleone}%
\newtheorem{theorem}{Theorem}
\newtheorem{corollary}[theorem]{Corollary}%

\theoremstyle{thmstyletwo}%
\newtheorem{remark}{Remark}%
\theoremstyle{thmstylethree}%
\newtheorem{definition}{Definition}%

\newcolumntype{A}{>{\centering\arraybackslash}p{0.2\columnwidth}}
\newcolumntype{B}{>{\centering\arraybackslash}p{0.15\columnwidth}}
\newcolumntype{C}{>{\centering\arraybackslash}p{0.12\columnwidth}}

\title[Article Title]{Learning Task-Aware Sampling with Shared Saliency through Density-Equalizing Mappings}
\author[1]{\fnm{Tsz Lok} \sur{Ip}}\email{enochitl@link.cuhk.edu.hk}
\author[2]{\fnm{Han} \sur{Zhang}}\email{hzhang863-c@my.cityu.edu.hk}
\author*[1]{\fnm{Lok Ming} \sur{Lui}}\email{lmlui@math.cuhk.edu.hk}

\affil*[1]{\orgdiv{Department of Mathematics}, \orgname{The Chinese University of Hong Kong}, \orgaddress{\city{Hong Kong}, \country{China}}}
\affil[2]{\orgdiv{Department of Mathematics}, \orgname{City University of Hong Kong}, \orgaddress{\city{Hong Kong}, \country{China}}}

\nolinenumbers

\begin{document}

\abstract{
In image and surface-based learning tasks, convolutional features are typically extracted using receptive fields that are sampled uniformly across the entire domain. However, informative structures are rarely distributed uniformly in practice and are often concentrated in localized regions. Such phenomena are particularly common in medical imaging, where pathological changes are spatially confined. Consequently, uniform convolution allocates equal computational effort to both informative and uninformative regions, resulting in inefficient feature extraction and suboptimal utilization of model capacity.
To address this issue, we propose a framework for task-adaptive sampling that dynamically redistributes computational attention according to the spatial importance of the data. Specifically, we introduce the Density-Equalizing Convolutional Neural Network (DECNN), which employs density-equalizing mappings to guide convolution through a learned density function. The density function encodes the relative importance of different regions and induces a transformation that enlarges informative areas while compressing less relevant ones. As a result, convolutional receptive fields are redistributed non-uniformly over the domain, enabling denser sampling in task-relevant regions.
By coupling this importance-driven transformation with convolution, DECNN performs adaptive feature extraction that focuses computational resources on informative structures. This leads to more efficient use of model capacity, yielding a lightweight yet expressive architecture while simultaneously producing an interpretable saliency map. Experiments on image classification and craniofacial surface analysis demonstrate that DECNN achieves competitive or superior performance with fewer parameters, accurately identifies task-relevant regions, and remains robust under complex geometric variations.
}

\keywords{
  Density-Equalizing Mapping, Quasi-Conformal Geometry, Deformable Convolution, Geometric Learning, Manifold Learning
}

\maketitle


\section{Introduction}

In many image analysis tasks, data are processed at a predefined resolution, where a fixed number of feature samples can be extracted for downstream learning. A similar situation arises in surface-based learning, where geometric domains must be discretized into finite samples for computation. Under such settings, a fundamental yet often overlooked question is how to optimally allocate sampling resources for feature extraction. In most existing approaches, convolutional receptive fields are distributed uniformly across the domain to ensure unbiased spatial coverage. However, this strategy can be inefficient in practice, since informative features are rarely distributed evenly. Instead, discriminative patterns are often concentrated in localized regions, while large portions of the domain contribute relatively little to the learning task.

This issue is particularly evident in medical imaging, where pathological patterns are often localized within specific anatomical structures, while the surrounding regions remain largely consistent across subjects. For instance, neurodegenerative diseases such as Alzheimer’s disease primarily affect localized brain regions, and lung nodules in chest CT scans are confined to small areas. In such scenarios, uniformly distributing computational resources leads to inefficient feature extraction, as it fails to prioritize the most informative regions. Therefore, a key challenge is to design a task-aware sampling strategy that maximizes feature representation under a fixed computational budget.

To address this challenge, we aim to develop an optimal feature extraction mechanism that adapts to the underlying data distribution. The key idea is to guide feature extraction used for the convolution operation by using a learned measure of spatial importance, so that sampling is no longer uniform but instead aligned with task-relevant structures. In particular, we aim to construct an importance map that quantifies how much each region contains informative features across a collection of images. This learned measure then directly governs how sampling resources are allocated during feature extraction and learning, enabling the model to focus on informative regions while reducing redundancy in less relevant areas.

To this end, we propose a novel framework based on density-equalizing mappings, which provide a geometric way to realize task-aware sampling. Specifically, we introduce a density function defined on the domain to encode spatial importance. The corresponding density-equalizing mapping redistributes this density by diffusing it toward a uniform distribution, effectively transforming the domain such that regions of higher importance are expanded while less relevant areas are compressed. In practice, this transformation can be computed by solving a diffusion equation driven by the learned density function. Through this mechanism, the allocation of sampling points becomes adaptive and aligned with the task.

Building on this idea, we developed the Density-Equalizing Convolutional Neural Network (DECNN), which integrates this adaptive sampling mechanism into deep learning. At its core is the proposed Density-Equalizing Convolution (DEC), an adaptive convolution operator defined on simply connected open surfaces. The DEC operator is modulated by a learnable density map through the associated density-equalizing mapping, allowing the convolution to operate on a reparameterized domain that reflects feature importance. As a result, the network can utilize its receptive fields more efficiently, focusing on informative regions while reducing redundant computation. This leads to a lightweight yet expressive model that achieves improved performance without simply increasing model complexity.

We validate the proposed framework through a series of experiments. First, we apply DECNN to an image classification task in which only a small, unknown subregion contains task-relevant information. The results show that our method is able to learn an effective importance map that guides task-specific sampling, enabling the model to focus more on informative regions while maintaining classification performance with fewer trainable parameters. We further evaluate the method on a craniofacial analysis task involving 3D facial surfaces. In this setting, the learned importance map consistently highlights anatomically meaningful regions that are critical for distinguishing different classes. By optimizing sampling and computational resources according to this learned measure, the model achieves superior classification accuracy compared to several existing approaches, even when operating with a limited number of feature points.

To summarize, the main contributions of this work are as follows:
\begin{itemize}
\item We introduce Density-Equalizing Convolution, a novel family of geometrically interpretable convolution operators induced by a density function defined on simple-connected open surfaces.

\item We developed a theoretical analysis to show that the associated density-equalizing mapping redistributes sampling points according to a learned density function, such that regions with higher density values receive more samples. This establishes a principled connection between spatial importance and sampling distribution.

\item We propose the Density-Equalizing Convolutional Neural Network, a general framework that integrates DEC into standard 2D CNN architectures, enabling efficient learning on simply connected Riemann surfaces.

\item We demonstrate that DECNN can automatically identify shared task-relevant regions by learning an optimal density function, which serves as a geometrically meaningful saliency map and guides task-aware feature extraction.
\end{itemize}

\section{Related Works}

\subsection{Deformable Convolution and Geometric Learning}
Spatially deformable convolution has been shown to enhance learning capability and has received increasing attention. Jeon \etal~\cite{jeon2017active} proposed Active Convolution, incorporating a trainable attention mechanism into the convolution process. Related methods, such as Spatial Transformer Networks and their variants~\cite{jaderberg2015spatial,zhang2024learning,zhang2025deformation}, introduce learnable transformation modules that warp input feature maps according to trainable parameters. Subsequently, Dai \etal~\cite{dai2017deformable} proposed Deformable Convolution (DCN), which adds learnable offsets to convolutional kernels, enabling spatial adaptation. While DCN attracted much attention, it faces challenges in handling large deformations and achieving occlusion invariance. Variants such as Deformable Convolution v2~\cite{zhu2019deformable} and Deformable RoI Pooling~\cite{dai2017deformable} were introduced to mitigate these limitations, though Luo \etal~\cite{luo2016understanding} demonstrated that not all pixels contribute equally to the final DCN output. These methods, however, are primarily designed for Euclidean domains.

Geometric deep learning generalizes neural networks to non-Euclidean domains such as manifolds and meshes, where data lie on curved spaces rather than regular grids. Unlike Euclidean convolution~\cite{lecun1995convolutional}, convolution on manifolds must account for curvature and the lack of global translation invariance. Early works defined local convolution operators using intrinsic geometric structures, establishing the foundation for learning on curved surfaces~\cite{masci2015geodesic, boscaini2016learning}. Subsequent approaches explored intrinsic and mesh-based convolutions for shape analysis and registration~\cite{bouritsas2019neural, gong2019spiralnet++, hanocka2019meshcnn}, while Parallel Transport Convolution~\cite{schonsheck2022parallel} improved geometric consistency and filter localization. More recently, Zhang \etal~\cite{zhang2025quasi} introduced quasi-conformal convolution, which leverages quasi-conformal maps to adapt convolutional kernels, enabling geometry-aware, locally adaptive filtering on Riemann surfaces.

\subsection{Computational Quasi-Conformal Mapping and Density-Equalizing Mapping}
Computational quasi-conformal mapping provides a powerful framework to control geometric deformation, with successful applications in image processing~\cite{lam2014landmark} and surface analysis~\cite{levy2002least,gu2004genus}. Leveraging the Beltrami representation, mappings between domains can preserve desirable geometric properties such as bijectivity and smoothness by controlling the associated Beltrami coefficients. Motivated by the need to preserve different geometric features, a variety of parameterization methods have been developed~\cite{gu2003global}, and such representations have also found success in computational fabrication~\cite{Soliman:2018:OCS,Crane:2013:RFC,panetta2019x}. With their ability to handle large deformations, quasi-conformal methods have been applied to registration~\cite{lam2014landmark,choi2015fast}, restoration~\cite{zhang2025deformation}, and segmentation under topological and convexity constraints~\cite{zhang2025qis,zhang2024learning}. Furthermore, quasi-conformal analysis has been employed for deformation studies with uncertainties, enabling robust medical image analysis~\cite{zhang2022nondeterministic,zhang2022new}.

An important component in our method is the use of density-equalizing mappings, which rescale spatial domains according to prescribed density functions~\cite{gastner2004diffusion}. Choi and Rycroft~\cite{choi2018density, choi2020area} extended DE mappings to general simply-connected open surfaces and volumetric images~\cite{choi2021volumetric}, enabling applications on complex 3D geometries. To further improve geometric fidelity, Lyu \etal~\cite{lyu2024bijective} combined quasi-conformal mapping with DE mappings, producing bijective density-equalizing quasi-conformal mappings for simply- and multiply-connected surfaces, as well as spherical genus-0 shapes~\cite{lyu2024spherical}. More recently, Yao \etal~\cite{yao2026toroidal} generalized these techniques to toroidal genus-1 surfaces, extending the applicability of DEQ mappings to increasingly complex topologies.

\section{Mathematical Background}

In this section, we introduce the mathematical foundations underlying our framework. In particular, quasi-conformal mappings provide a flexible yet well-controlled way to describe local geometric distortions and serve as the theoretical basis of this work. Density-equalizing mappings are the primary tool we employ, enabling the redistribution of spatial importance across the domain. We also present the definition of manifold convolution, which provides background for categorizing our proposed Density-Equalizing Convolution (DEC).

\subsection{Quasi-Conformal Geometry}

\begin{definition}[Quasi-conformal map]
A quasi-conformal map is a map $f: \mathbb{C} \rightarrow \mathbb{C}$ that satisfies the Beltrami equation
\begin{equation}
\frac{\partial f}{\partial \bar{z}}=\mu(z) \frac{\partial f}{\partial z}
\label{eq:beleq}
\end{equation}
for some complex-valued function $\mu$ satisfying $\|\mu\|_{\infty}<1$ and $\frac{\partial f}{\partial z}$ is non-vanishing almost everywhere. The complex partial derivatives are given by
\begin{equation}
\frac{\partial f}{\partial z}:=\frac{1}{2}\left(\frac{\partial f}{\partial x}-i \frac{\partial f}{\partial y}\right) 
\quad \text{ and } \quad 
\frac{\partial f}{\partial \bar{z}}:=\frac{1}{2}\left(\frac{\partial f}{\partial x}+i \frac{\partial f}{\partial y}\right).
\end{equation}

\end{definition}

$\mu$ is called the \emph{Beltrami representation}, or the \emph{Beltrami coefficient}, of the quasi-conformal map $f$. Note that if $\mu \equiv 0$, Eq~\eqref{eq:beleq} becomes the Cauchy-Riemann equation, and so $f$ becomes conformal. Infinitesimally, we can express $f$ in a neighbourhood around a point $p$ as
\begin{equation}
\label{eq:qc_local}
\begin{aligned}
f(z) &=f(p)+f_{z}(p)(z-p)+f_{\bar{z}}(p) \overline{(z-p)} \\
&=f(p)+f_{z}(p)((z - p)+\mu(p) \overline{(z-p)}),
\end{aligned}
\end{equation}
hence $f$ is conformal around the neighbourhood of $p$ if and only if $\mu(p) = 0$.

Furthermore, $\mu$ can be seen as a measure of non-conformality. While Eq~\eqref{eq:qc_local} shows that the quasi-conformal mapping $f$ maps a small circle centered at $p$ to a small eclipse centered at $f(p)$, the angle of maximal magnification is $\arg (\mu(p)) / 2$ with magnifying factor $|f_{z}(p)|(1 + |\mu(p)|)$; the maximal shrinking is along the orthogonal angle $(\arg (\mu(p))-\pi) / 2$ with shrinking factor $|f_{z}(p)|(1 - |\mu(p)|)$, which is illustrated in Fig~\ref{fig:qcmap}. Then, the maximal quasi-conformal dilation of $f$ is given by
\begin{equation}
K=\frac{1+\|\mu\|_{\infty}}{1-\|\mu\|_{\infty}}.
\end{equation}

\begin{figure}[t]
    \centering
    \includegraphics[width=0.4\textwidth]{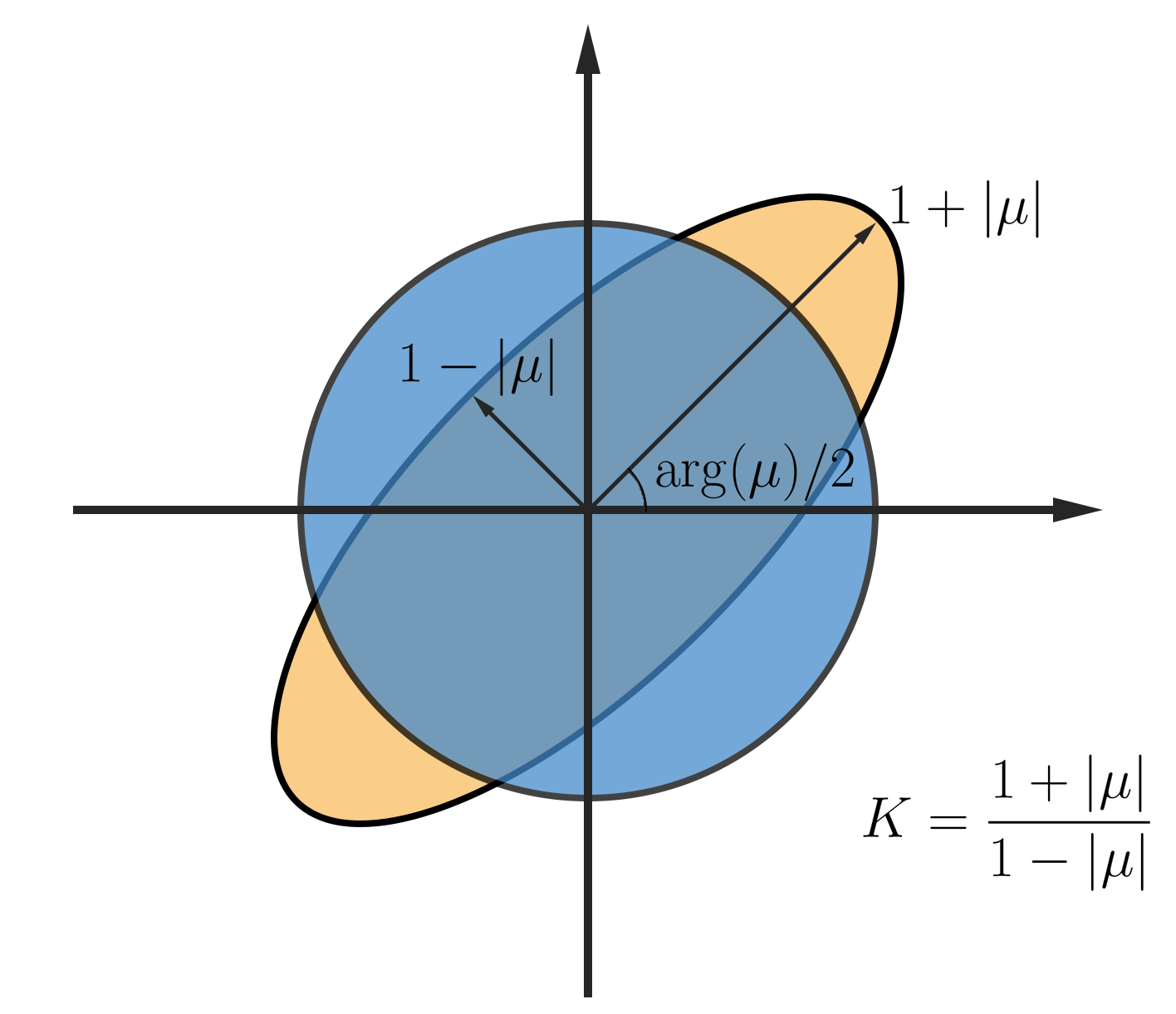}
    \caption{Illustration of Beltrami coefficient being the measure of conformality distortion of a quasi-conformal map}
    \label{fig:qcmap}
\end{figure}

Quasi-conformal geometry is useful for studying deformations, not only because it provides an effective way to describe angle distortion, but also because it supports smooth one-to-one mappings by its diffeomorphism property. By introducing a norm constraint on $\mu$, the bijectivity of $f$ can be ensured, as illustrated in the following theorem:
\begin{theorem}
If $f: \mathbb{C} \rightarrow \mathbb{C}$ is a $C^{1}$ map. Define 
\begin{equation}
\mu=\frac{\partial f}{\partial \bar{z}} / \frac{\partial f}{\partial z}.
\end{equation}
If $\mu$ satisfies $\left\|\mu_{f}\right\|_{\infty}<1$, then $f$ is bijective.
\end{theorem}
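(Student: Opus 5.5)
The plan is to obtain bijectivity in two stages: first show that $f$ is a local diffeomorphism using the bound on $\mu$, then upgrade local to global injectivity and surjectivity with a topological covering argument.

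Before starting, note that the statement as literally written needs extra hypotheses. The map $f(z)=e^{z}$ is $C^{1}$ with $\mu\equiv 0$, yet it is neither injective nor surjective onto $\mathbb{C}$. I will therefore assume what is implicit in the computational setting:
\begin{itemize}
\item $\partial f/\partial z$ never vanishes, so that $\mu$ is defined everywhere;
\item $f$ is proper, i.e.\ $|f(z)|\to\infty$ as $|z|\to\infty$. Equivalently, $f$ extends continuously to the Riemann sphere with $f(\infty)=\infty$, or $f$ coincides with a fixed homeomorphism outside a compact set, which is how such maps arise in practice.
\end{itemize}

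\textbf{Step 1 (Jacobian).} Writing $f=u+iv$ and using the definitions of $f_z$ and $f_{\bar z}$, a direct computation gives
\begin{equation}
J_f=u_xv_y-u_yv_x=|f_z|^2-|f_{\bar z}|^2=|f_z|^2\bigl(1-|\mu|^2\bigr).
\end{equation}
Since $f_z\neq 0$ and $\|\mu\|_\infty<1$, we get $J_f>0$ everywhere. The inverse function theorem then shows that $f$ is an orientation-preserving local $C^{1}$ diffeomorphism, hence in particular an open map.

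\textbf{Step 2 (surjectivity).} The image $f(\mathbb{C})$ is open because $f$ is an open map. It is also closed: if $f(z_n)\to w$, then properness keeps $(z_n)$ bounded, so a subsequence converges to some $z$, and continuity gives $f(z)=w$. By connectedness of $\mathbb{C}$, $f(\mathbb{C})=\mathbb{C}$.

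\textbf{Step 3 (injectivity via covering theory).} A proper local homeomorphism between connected, locally compact Hausdorff spaces is a covering map. To see this, fix $w$. Its fibre $f^{-1}(w)$ is discrete (by local injectivity) and compact (by properness), hence finite, say $\{z_1,\dots,z_k\}$. Choose disjoint neighbourhoods $U_i\ni z_i$ on which $f$ is a homeomorphism. Properness then yields a small disc $V\ni w$ with $f^{-1}(V)\subset\bigcup_i U_i$, which gives the evenly covered neighbourhood. Since the target $\mathbb{C}$ is simply connected and the source is connected, the covering is one-sheeted, so $f$ is injective. Together with Step 2, $f$ is a bijection, and by Step 1 in fact a $C^{1}$ diffeomorphism.

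The main obstacle is Step 3, specifically producing the evenly covered neighbourhood $V$ from properness; this is exactly where the missing global hypothesis is indispensable. If one prefers to avoid covering theory, an alternative is a degree argument. Extend $f$ to a self-map of the sphere; its degree equals $\sum_{z\in f^{-1}(w)}\operatorname{sign}J_f(z)=\#f^{-1}(w)$ for every $w$, because all signs are positive. Evaluating the degree near $\infty$, where $f$ is (homotopic to) the identity, gives $1$, so every fibre has exactly one point.
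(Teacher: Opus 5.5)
The paper gives no proof of this statement. It is quoted as background from the quasi-conformal literature, so there is no argument of the authors' to compare with, and your treatment of the hypotheses is the main point here. Your counterexample is correct: $e^{z}$ is $C^{1}$ with $f_z=e^{z}\neq 0$ and $\mu\equiv 0$, yet it is neither injective nor surjective. So the theorem as printed is false and extra hypotheses are required. Both of yours are genuinely needed. $e^{z}$ shows properness cannot be dropped. $z^{2}$ is proper with $\mu=0$ off the origin, hence $\|\mu\|_\infty=0$ in the almost-everywhere sense of the paper's own definition; it shows that $f_z\neq 0$ must hold everywhere, not merely almost everywhere.

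Under these hypotheses your main route is correct and standard. $J_f=|f_z|^2(1-|\mu|^2)>0$ gives a local diffeomorphism, a proper local homeomorphism of $\mathbb{C}$ is a covering, and a connected covering of the simply connected $\mathbb{C}$ is one-sheeted. One small repair is needed in Step 3. Besides $f^{-1}(V)\subset\bigcup_i U_i$, you must also shrink $V$ into the finite intersection $\bigcap_i f(U_i)$, which is open because $f$ is open. Otherwise some $U_i\cap f^{-1}(V)$ need not map onto $V$, and $V$ is not evenly covered.

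Two cautions about your side remarks. First, agreeing with a fixed homeomorphism outside a compact set is strictly stronger than properness, not equivalent to it. Second, this matters for your degree alternative. Under bare properness, the claim that $f$ is homotopic to the identity near $\infty$ is unjustified: near $\infty$ a proper local diffeomorphism could a priori behave like $z^{d}$ with $d>1$, and excluding that is exactly what Step 3 does. So the degree argument, as written, is not an independent proof under your stated hypotheses.

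Under the stronger hypothesis, $f=g$ off a compact set $K$, the degree argument is fine. Any $w\notin f(K)\cup g(K)$ has the single preimage $g^{-1}(w)$, where $J_f>0$. Hence the extension of $f$ to the sphere has degree $1$, and every fibre is a singleton. This boundary-type hypothesis is also the natural one for the paper's actual use, where $f_\rho$ maps the rectangle $\Omega$ to itself and one would assume it restricts to a homeomorphism of $\partial\Omega$.
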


\subsection{Density-Equalizing Mappings}
\label{sec:de_map}
Given a population density function $\rho$ defined on a 2D planar domain $\Omega$, Gastner and Newman~\cite{gastner2004diffusion} proposed a method to compute density-equalizing mappings based on diffusion. Through the diffusion process, the domain is deformed so that the density is eventually equalized. The key feature of such mapping is that regions with high densities are enlarged and regions with low densities are shrunk. 

Since diffusion follows the gradient of $\rho$, let $\boldsymbol{j} = - \nabla \rho$ be the flux by Fick's law of diffusion. By conservation of local population, 
\begin{equation}
    \frac{\partial \rho}{\partial t} = - \nabla \cdot \boldsymbol{j} =\Delta \rho.
\end{equation}
Note that $\boldsymbol{j} = \boldsymbol{v} \cdot \rho$, where $\boldsymbol{v}$ is the velocity, we have
\begin{equation}
    \boldsymbol{v} = -\frac{\nabla \rho}{\rho}.
\end{equation}
Therefore, the displacement of any point can be calculated by
\begin{equation}\label{displacement}
    \boldsymbol{x}(t) = \boldsymbol{x}(0) + \int^{t}_{0} \boldsymbol{v}(\boldsymbol{x}(t'),t') \, dt'.
\end{equation}
Note that the points flow from high-density regions to low-density regions. The density $\rho$ will be equalized to $\bar{\rho}$ when $t \to \infty$, where
\begin{equation}
    \bar{\rho} = \frac{1}{|\Omega|} \int_\Omega \rho(\boldsymbol{x}, 0) \, d\boldsymbol{x}.
    \end{equation}

\subsection{Manifold Convolution}

In the Euclidean case, convolution is defined by translating the kernel $k$ using the displacement vector $x-y$. However, this notion does not directly extend to manifolds, where a suitable definition of displacement must first be established. Thus, we first present the formal definition for the displacement function~\cite{zhang2025quasi}:

\begin{definition}[Displacement function and displacement vector]
    Let $\mathcal{M}$ be a Riemannian $n$-manifold, and $U \subseteq \mathcal{M}$ be a subset of $\mathcal{M}$. A function $d: U\times U \to \mathbb{R}^n$ is a \textit{displacement function} on $U$ if it satisfies the following properties:
    \begin{enumerate}
        \item For all $p, q \in U$, $d(p,q) = 0$ if and only if $p=q$.
        \item For all $p, q, r \in U$, $d(p,r) = d(p,q) + d(q,r)$.
    \end{enumerate}
    Then, the vector $d(p, q)$ is referred to as the \textit{displacement vector} from $p$ to $q$.
    
    Moreover, if the functions $d(\cdot, q_0)$ and $d(p_0, \cdot)$ are orientation-preserving homeomorphisms from $U$ to subsets of $\mathbb{R}^n$ depending on $p_0$ and $q_0$ for all fixed $p_0, q_0 \in U$, then the displacement function $d$ is said to be \textit{regular}.
\end{definition}

The above displacement function is introduced to replace the standard expression $ x-y $ in $ \mathbb{R}^n $ by mimicking translation symmetry \cite{schonsheck2022parallel}. It ensures consistent and additive displacement between points, enabling the kernel to be effectively translated across the manifold.

With the displacement function $d$ on $\mathcal{M}$, the definition of manifold convolution is given as:
\begin{definition}[Manifold convolution]\label{def:mani_conv}
Let $\mathcal{M}$ be a Riemannian $n$-manifold with a metric $g$, and let $h: \mathcal{M} \to \mathbb{R}$ be a manifold function with a kernel function $k: \mathbb{R}^{n}\to \mathbb{R}$. The convolution of $h$ and $k$ on $\mathcal{M}$ is defined as:
\begin{equation}\label{eq:mani_conv}
(h \ast_{\mathcal{M},d,g} k)(p) = \int_{\mathcal{M}} h(q) k(d(p, q)) \, dq,
\end{equation}
where:
\begin{itemize}
    \item $p, q \in \mathcal{M}$,
    \item $d: \mathcal{M}\times\mathcal{M} \to \mathbb{R}^n $ is a global displacement function on $\mathcal{M}$.
\end{itemize}
For simplicity, we denote $*_{\mathcal{M}, d,g}$ as $*_{d,g}$.
Moreover, $*_{d,g}$ is said to be regular if the displacement function $d$ is regular.
\end{definition}

Convolution on manifolds is challenging due to curvature, which prevents a straightforward notion of kernel shifting as in Euclidean space. To address this, we define convolution on a Riemann surface via its 2D parametric domain:
\begin{definition}[Parametrized manifold convolution]
\label{def:parameterizedconv}
    Let $\mathcal{M}$ be a Riemannian $n$-manifold, and let $h: \mathcal{M} \to \mathbb{R}$ be a manifold function with a kernel function $k: \mathbb{R}^{n}\to \mathbb{R}$. Suppose there exists a bijective parametrization $\phi:\Omega \to \mathcal{M}$, where $\Omega \subset \mathbb{R}^n$. The parametrized manifold convolution of $h$ and $k$ with respect to $\phi$ is defined as:
    \begin{equation}
        (h *_\phi k)(p) = \int_\Omega h(\phi(y))k(\phi^{-1}(p)-y)dy.
    \end{equation}    
\end{definition}

This definition simplifies manifold convolution by expressing it on a Euclidean domain in a computational sense. The following lemma establishes its equivalence to the displacement-based formulation in manifold convolution by showing that a displacement function induces a corresponding parameterization:
\begin{theorem}\label{thm:equiv_conv}
    Let $\mathcal{M}$ be a Riemannian $n$-manifold and $d$ be a displacement function on $\mathcal{M}$. Then there exists a bijective parametrization $\phi: \Omega \to \mathcal{M}$, where $\Omega \subset \mathbb{R}^n$, along with a metric $g$ of $\mathcal{M}$, such that $*_{d, g} = *_\phi$. Conversely, for any bijective parametrization $\phi: \Omega \to \mathcal{M}$, there exists a displacement function $d$ on $\mathcal{M}$ and a metric $g$ of $\mathcal{M}$ such that $*_{d, g} = *_\phi$.
\end{theorem}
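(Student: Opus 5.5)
The plan is to exploit the additivity axiom of a displacement function. It forces $d$ to be a difference of a single coordinate map, and that coordinate map will serve as $\phi^{-1}$. For the metric $g$, we take the pullback of the Euclidean metric, so that the volume form $dq$ on $\mathcal{M}$ becomes Lebesgue measure on $\Omega$.

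\emph{Forward direction.} Fix a base point $o \in \mathcal{M}$ and define $\psi:\mathcal{M}\to\mathbb{R}^n$ by $\psi(q) = d(o,q)$. By property 2,
\[
d(p,q) = d(p,o) + d(o,q).
\]
Properties 1 and 2 give $d(p,o) + d(o,p) = d(p,p) = 0$, so
\[
d(p,q) = \psi(q) - \psi(p).
\]
Next, $\psi$ is injective: if $\psi(q)=\psi(q')$, then $d(q,q') = \psi(q')-\psi(q) = 0$, hence $q=q'$ by property 1. Set $\Omega = \psi(\mathcal{M})$ and $\phi = \psi^{-1}:\Omega\to\mathcal{M}$, which is a bijective parametrization. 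Equip $\mathcal{M}$ with the metric $g = \psi^*(\text{Euclidean})$, so that the Riemannian volume satisfies $dq = dy$ under $q=\phi(y)$.

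With this metric, the change of variables $q = \phi(y)$ in \eqref{eq:mani_conv} gives
\[
\int_\Omega h(\phi(y))\, k\big(\psi(\phi(y)) - \psi(p)\big)\, dy = \int_\Omega h(\phi(y))\, k\big(y - \phi^{-1}(p)\big)\, dy.
\]
This differs from Definition~\ref{def:parameterizedconv} by a sign inside $k$. I would fix this by using $\psi(q) = d(q,o)$ instead. Then $d(p,q) = d(p,o) - d(q,o) = \psi(p) - \psi(q)$, which gives exactly $k(\phi^{-1}(p) - y)$. Thus $*_{d,g} = *_\phi$.

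\emph{Converse.} Given $\phi$, define $d(p,q) = \phi^{-1}(p) - \phi^{-1}(q)$. Properties 1 and 2 follow from the bijectivity of $\phi$ and telescoping. Take $g$ to be the pushforward of the Euclidean metric under $\phi$. The same change of variables then yields $*_{d,g} = *_\phi$.

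\emph{Main obstacle.} The delicate step is making "$g$ as a pullback metric" legitimate. This requires $\psi$ to be a diffeomorphism, or at least a homeomorphism, onto an open set $\Omega$; a bare displacement function need not be continuous. I would treat this by first handling the regular case, where $d(\cdot,q_0)$ is an orientation-preserving homeomorphism, so $\psi$ is a homeomorphism onto its image. In the general case, I would interpret $g$ measure-theoretically: define $dq$ as the pushforward of Lebesgue measure by $\phi$. That is all the convolution integral actually uses. I would state explicitly that "metric" is understood in this sense, assuming smoothness of $d$ when a genuine Riemannian metric is required.
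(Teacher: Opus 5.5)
Your proof is correct. The sign-corrected choice $\psi(q)=d(q,o)$ gives $d(p,q)=\psi(p)-\psi(q)$ with $\psi$ injective by property 1, and taking $\phi=\psi^{-1}$ and $g=\psi^*g_{\mathbb{R}^n}$ turns $dq$ into $dy$, so $*_{d,g}=*_\phi$; the converse is the same construction run in the other direction. The paper itself states this theorem without proof (it is quoted as background from the quasi-conformal convolution work), but your construction is exactly the one it relies on in Remark~\ref{rmk:dec=2d}, namely $d(p,q)=f_\rho\circ\phi^{-1}(p)-f_\rho\circ\phi^{-1}(q)$ with $g=(f_\rho\circ\phi^{-1})^*g_{\mathbb{R}^2}$, and your caveat that a genuine Riemannian metric requires $d$ (or $\phi$) to be smooth is a fair point the statement glosses over.
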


\section{Proposed method}
\label{sec:dec_all}
In this section, we present the proposed model. We first introduce the Density-Equalizing Convolution (DEC), a novel convolution operator that enables task-aware sampling of feature points through an explicitly learned importance map induced by a density-equalizing function. Building on this operator, we then develop the Density-Equalizing Convolutional Neural Network (DECNN), which is constructed upon a learnable density map.

\subsection{Density-Equalizing Convolution}
\label{sec:dec}
The proposed DEC is to introduce a mechanism to define convolution operations that are explicitly guided by a population density function encoding spatial importance. This population density function is a positive scalar field that reflects the relative significance of different regions across the domain, and directly governs how the convolution adapts to the underlying data.

\begin{definition}[Density-Equalizing Convolution]
\label{def:de_conv}
    Let $\mathcal{M} \subset \mathbb{R}^3$ be a 2-manifold, and let $h: \mathcal{M} \to \mathbb{R}$ be a manifold function with a kernel function $k: \mathbb{R}^{n}\to \mathbb{R}$. Suppose there exists a parametrization $\phi: \Omega \to \mathcal{M}$ and a density-equalizing mapping $f_\rho: \Omega \to \Omega$ induced by density function $\rho$, where $\Omega \subset \mathbb{R}^2$. The density-equalizing convolution of $h$ and $k$ with respect to $\phi$ and $f_\rho$ is defined as:
    \begin{equation}
        (h \ast_{\phi, f_\rho} k)(p)
        = \int_{\Omega}h(\phi(f_\rho^{-1}(y')))k(f_\rho \circ \phi^{-1}(p) - y') dy'.
        \label{eq:de_conv}
    \end{equation}
\end{definition}

Density-equalizing convolution belongs to the set of parametrized convolution defined in Definition~\ref{def:parameterizedconv} by having the parameterization as $\phi \circ f^{-1}$. Thus, according to Theorem~\ref{thm:equiv_conv}, we could have the following remark:
\begin{remark}
\label{rmk:dec=2d}
Under the condition of Definition~\ref{def:de_conv}, the DEC of $h$ and $k$ can be written in the manifold convolution form given in Definition~\ref{def:mani_conv} as:
\begin{equation}   
\begin{aligned}
    (h \ast_{\phi, f_\rho} k)(p)
    &= \int_{\Omega}h(\phi(f_\rho^{-1}(y')))k(f_\rho\circ\phi^{-1}(p) - y') dy'\\
    &= \int_{\mathcal{M}}h(q)k(f_\rho\circ\phi^{-1}(p) - f_\rho\circ\phi^{-1}(q)) dq \\
    &= (h \ast_{d,g }k)(p).
    \label{eq:qcc_rewrite}  
\end{aligned}
\end{equation}
where:
\begin{itemize}
    \item $p,q \in \mathcal{M}$,
    \item $x=\phi^{-1}(p), y=\phi^{-1}(q) \in \Omega$,
    \item $x'=f_\rho(x), y'=f_\rho(y) \in \Omega$,    
    \item $d(p,q) = f_\rho \circ \phi^{-1}(p) - f_\rho \circ \phi^{-1}(q)$ is the displacement function,
    \item $g = (f_\rho \circ \phi^{-1})^* g_{\mathbb{R}^2}$ is the Riemannian metric of $\mathcal{M}$.
\end{itemize}    
Besides, it can also be rewritten into a general 2D convolution by having the pullback function $\tilde{h} = h \circ \phi$, and the transformed pullback function $\hat{h}=\tilde{h}\circ f_\rho ^{-1}$, which leads to
\begin{equation}
\begin{aligned}
    (h \ast_{\phi, f_\rho} k)(p)
    &= \int_{\mathcal{M}}h(q)k(f_\rho\circ\phi^{-1}(p) - f_\rho\circ\phi^{-1}(q)) dq\\
    &= \int_{\Omega} h\circ\phi(y) k(f_\rho(x) - f_\rho(y)) \, df_\rho(y) \\
    &= \int_{\Omega} \tilde{h}(y)k(f_\rho(x) - f_\rho(y)) \, df_\rho(y)\\
    &= \int_{\Omega} \tilde{h}\circ f_\rho^{-1}(y')k(x' - y') \, dy' \\
    &= \int_{\Omega} \hat{h}(y')k(x' - y') \, dy'\\
    &= (\hat{h} \ast k)(x').
    \label{eq:qcc2plain}
\end{aligned}
\end{equation}
\end{remark}

\subsection{Optimal Feature Extraction}
\label{sec:opt_disc}

In our framework, convolution is performed on a reparameterized domain induced by the density-equalizing mapping together with the parameterization. In this section, we revisit this process in a discrete setting, which provides a more convenient way to analyze the distribution of sampled features. In particular, this distribution can be interpreted through the notion of population in density-equalizing mapping theory, defined as the integral of the density function over a given region.

Let the image grid be denoted by $\{x_{i,j} : i = 1,\cdots,I,\; j = 1,\cdots,J\}$, representing pixel locations in the parameter domain. As previously presented, $\phi$ denotes the parameterization, $f_\rho$ denotes the density-equalizing mapping. Correspondingly, $\tilde{h} = h \circ \phi$, $\hat{h} = \tilde{h}\circ f_\rho^{-1}$. The discretized image is then given by $H = (H_{i,j})$, where
\begin{equation}
    H_{i,j} = \hat{h}(x_{i,j}) = \tilde{h}(f_\rho^{-1}(x_{i,j})).
\end{equation}
Let the convolution kernel be denoted by $K = (K_{m,n})$. The discrete convolution can be written as
\begin{equation}
\begin{aligned}
    (h \ast_{\phi,f_\rho} k)(q_{i,j})
    & = (\hat{h} \ast k)(x_{i,\,j})\\
    &\approx \sum_m \sum_n H_{i-m,\,j-n}\, K_{m,n} \\
    &= \sum_m \sum_n \hat{h}(x_{i-m,\,j-n})\, K_{m,n} \\
    &= \sum_m \sum_n h(q_{i-m,\,j-n})\, K_{m,n},
\end{aligned}
\end{equation}
where $q_{i,j} = f_\rho^{-1}(x_{i,j})$ are sampling points on the manifold domain.

This formulation shows that determining the density-equalizing mapping $f_\rho$ is equivalent to selecting the sampling locations $\{q_{i,j}\}$ in the original domain. Consequently, the learned mapping directly controls the spatial distribution of samples, allowing the convolution to adapt to the underlying importance structure of the data.

\begin{theorem}[Density-Induced Sampling Distribution (Probabilistic Formulation)]
\label{thm:DE_sampling}
Let $\Omega \subset \mathbb{R}^2$ be a rectangular domain, and let $X$ be a random variable uniformly distributed over $\Omega$, i.e., $x \sim \mathcal{U}(\Omega)$. Let $\rho : \Omega \to \mathbb{R}^+$ be a population density function, and $f_\rho : \Omega \to \Omega$ the density-equalizing mapping induced by $\rho$. Define the transformed random variable
\begin{equation}
Y = f_\rho^{-1}(X).
\end{equation}
Then $Y$ admits a probability density function $p_\rho$ on $\Omega$ defined by
\begin{equation}
p_\rho(y) = \frac{1}{P}\, \rho(y),
\end{equation}
where $P=\int_\Omega \rho(x)dx$ is the total population in $\Omega$.

\noindent In particular, for all $y_1, y_2 \in \Omega$,
\begin{equation}\label{eq:thm_rela1}
\rho(y_1) > \rho(y_2)
\;\;\Longrightarrow\;\;
p_\rho(y_1) > p_\rho(y_2).
\end{equation}
\end{theorem}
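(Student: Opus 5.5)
The approach is a change of variables. The key input is the defining property of the density-equalizing map $f_\rho$: it carries the population measure $\rho\,dx$ to the uniform measure $\bar\rho\,dx$ on $\Omega$, where $\bar\rho = P/|\Omega|$. The first step is to make this precise from the diffusion construction of Section~\ref{sec:de_map}. Write $\Phi_t$ for the flow of the velocity field $\boldsymbol{v} = -\nabla\rho/\rho$. The continuity equation $\partial_t \rho + \nabla\cdot(\rho \boldsymbol{v}) = 0$ holds, and $\boldsymbol{j}\cdot \boldsymbol{n} = 0$ on $\partial\Omega$ (Neumann condition, so $\Omega$ is mapped to itself). Together these give conservation of population along the flow:
\begin{equation*}
\int_{\Phi_t(A)} \rho(\boldsymbol{x},t)\,d\boldsymbol{x} = \int_A \rho(\boldsymbol{x},0)\,d\boldsymbol{x} \quad\text{for every measurable } A\subseteq\Omega .
\end{equation*}
Letting $t\to\infty$, with $\rho(\cdot,t)\to\bar\rho$ and $f_\rho = \lim_t \Phi_t$, yields
\begin{equation*}
\bar\rho\,|f_\rho(A)| = \int_A \rho(y)\,dy .
\end{equation*}
Equivalently, at points of differentiability, $\det Df_\rho(y) = \rho(y)/\bar\rho$.

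The second step computes the law of $Y$. For measurable $A\subseteq\Omega$, bijectivity of $f_\rho$ gives $\{Y\in A\} = \{X\in f_\rho(A)\}$. Since $X$ is uniform,
\begin{equation*}
\mathbb{P}(Y\in A) = \frac{|f_\rho(A)|}{|\Omega|} = \frac{1}{\bar\rho\,|\Omega|}\int_A \rho(y)\,dy = \frac{1}{P}\int_A \rho(y)\,dy .
\end{equation*}
So $Y$ has density $p_\rho = \rho/P$. The same conclusion follows from the change-of-variables formula $p_Y(y) = p_X(f_\rho(y))\,|\det Df_\rho(y)|$. The monotonicity claim \eqref{eq:thm_rela1} is then immediate, because $P>0$ is a fixed constant.

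The main obstacle is the first step, namely justifying rigorously that the limit map $f_\rho$ exists, is a bijection of $\Omega$ onto itself, and pushes $\rho\,dx$ forward to the uniform measure. I would proceed in three parts.
\begin{enumerate}
\item Assume $\rho$ is smooth and bounded below by a positive constant. Standard parabolic theory then keeps $\rho(\cdot,t)$ positive and smooth and makes it converge exponentially to $\bar\rho$ (Neumann heat equation on a rectangle).
\item Show that $\boldsymbol{v}$ decays exponentially, so the trajectories in \eqref{displacement} converge and $f_\rho$ is well defined.
\item Show that $f_\rho$ is a homeomorphism, by Liouville's formula for the Jacobian of $\Phi_t$ and the tangency of $\boldsymbol{v}$ at $\partial\Omega$.
\end{enumerate}
If a fully rigorous treatment is out of reach, I would state the result with $f_\rho$ defined as the bijective map satisfying the population-preservation identity above. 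That is exactly the property the diffusion method is designed to achieve, and the rest of the argument is then a direct computation.
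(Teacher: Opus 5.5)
Your proposal is correct and takes essentially the same route as the paper. Both arguments rest on population conservation, $\int_{A}\rho\,dy=\bar\rho\,|f_\rho(A)|$. The paper turns this into the a.e.\ Jacobian identity $|\det Df_\rho|=\rho/\bar\rho$ and then applies the density change-of-variables formula. You instead read off $\mathbb{P}(Y\in A)=\frac{1}{P}\int_A\rho\,dy$ directly at the level of sets, which avoids needing differentiability of $f_\rho$. Your derivation of the conservation identity from the continuity equation with no-flux boundary conditions, and your plan for existence and bijectivity of the limit map, justify steps the paper simply asserts.
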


\begin{proof}
\noindent Let $x = f(y)$, and $\bar{\rho} = \frac{P}{|\Omega|}$ be the final equalized density value. By conservation of total population, for any measurable subdomain $\Omega' \subset \Omega$,
\begin{equation}
\begin{aligned}
    \int_{\Omega'} \rho(y) \, dy &= \int_{f(\Omega')} \bar{\rho} \, dx \\
    &= \int_{\Omega'} \bar{\rho} \left|\det Df_\rho(y)\right| \, dy
\end{aligned}
\end{equation}
Since $\Omega'$ is arbitrary, we have
\begin{equation}
     \left|\det Df_\rho(y)\right| = \frac{\rho(y)}{\bar{\rho}}
\end{equation}
for almost every $y \in \Omega$.

Denote $p_0$ as the probability density function of $X$, then $Y$ admits a probability density function $p_\rho$ on $\Omega$ given by
\begin{equation}
\begin{aligned}
p_\rho(y) 
&\stackrel{\text{a.e.}}{=} p_0(x) \left|\det Df_\rho(y)\right| \\
&\stackrel{\text{a.e.}}{=} \frac{1}{|\Omega|}\, \frac{\rho(y)}{\bar{\rho}}\\
&\stackrel{\text{a.e.}}{=} \frac{1}{P}\, \rho(y).
\end{aligned}
\end{equation}
Hence $p_\rho(y)=\frac{1}{P}\, \rho(y)$ is a probability density function of $Y$.
Since $P$ is positive, it is straightforward to see that the relation Eq~\eqref{eq:thm_rela1} holds.
\end{proof}

\begin{corollary}
\label{cor:DE_sampling1}
    \noindent For any measurable subdomains $\Omega_1, \Omega_2 \subset \Omega$, if
    \begin{equation}
    \frac{1}{|\Omega_1|}\int_{\Omega_1} \rho(y)\, dy >
    \frac{1}{|\Omega_2|}\int_{\Omega_2} \rho(y)\, dy,
    \end{equation}
    then
    \begin{equation}\label{eq:P_ineq}
    \frac{1}{|\Omega_1|}\mathbb{P}(y \in \Omega_1)
    >
    \frac{1}{|\Omega_2|}\mathbb{P}(y \in \Omega_2).
    \end{equation}
\end{corollary}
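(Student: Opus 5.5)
The plan is to integrate the density $p_\rho$ supplied by Theorem~\ref{thm:DE_sampling} over each subdomain and then compare averages. Here $y$ in \eqref{eq:P_ineq} is understood as the transformed random variable $Y = f_\rho^{-1}(X)$ with $X \sim \mathcal{U}(\Omega)$, and $\Omega_1,\Omega_2$ are measurable with positive measure, so that the averages in the hypothesis make sense.

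First, for any measurable $\Omega_k \subset \Omega$ I will compute $\mathbb{P}(Y \in \Omega_k)$ by integrating the density $p_\rho = \rho/P$ from Theorem~\ref{thm:DE_sampling}:
\begin{equation*}
\mathbb{P}(Y \in \Omega_k) = \int_{\Omega_k} p_\rho(y)\,dy = \frac{1}{P}\int_{\Omega_k} \rho(y)\,dy .
\end{equation*}
The identity $p_\rho = \rho/P$ is only established almost everywhere, but this is harmless, because changing a density on a null set does not affect its integrals. Equivalently, I can argue directly. We have $\mathbb{P}(Y\in\Omega_k) = \mathbb{P}(X \in f_\rho(\Omega_k)) = |f_\rho(\Omega_k)|/|\Omega|$. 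By the conservation identity used in the proof of the theorem,
\begin{equation*}
\bar\rho\,|f_\rho(\Omega_k)| = \int_{\Omega_k}\rho(y)\,dy .
\end{equation*}
Since $\bar\rho|\Omega| = P$, this gives the same formula.

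Next, I divide by $|\Omega_k|$ to obtain
\begin{equation*}
\frac{1}{|\Omega_k|}\mathbb{P}(Y\in\Omega_k) = \frac{1}{P}\cdot\frac{1}{|\Omega_k|}\int_{\Omega_k}\rho(y)\,dy .
\end{equation*}
The two sides of \eqref{eq:P_ineq} are therefore the two sides of the hypothesis, each multiplied by the same constant $1/P$. Because $\rho>0$ and $|\Omega|>0$, we have $0<P<\infty$, so multiplying the hypothesis by $1/P$ preserves the strict inequality. This yields \eqref{eq:P_ineq}.

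There is no real obstacle in this argument. The only points needing care are the measure-theoretic details: the a.e.\ equality of densities, the positivity of $|\Omega_k|$, and the measurability of $f_\rho(\Omega_k)$. The last follows because $f_\rho$ is a bijective diffeomorphism of $\Omega$, as already assumed in Theorem~\ref{thm:DE_sampling}.
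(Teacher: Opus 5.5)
Your proposal is correct and matches the paper's proof: both substitute $p_\rho = \rho/P$ from Theorem~\ref{thm:DE_sampling} into $\mathbb{P}(Y\in\Omega_k)=\int_{\Omega_k}p_\rho(y)\,dy$, then note that the hypothesis and the conclusion differ only by the positive factor $1/P$. Your extra care about the a.e.\ equality, $|\Omega_k|>0$, and the alternative derivation via the conservation identity refines details the paper leaves implicit, without changing the argument.
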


\begin{proof}
Note that $p_\rho(y) = \frac{1}{P}\, \rho(y)$, then
\begin{equation}
\begin{aligned}
\frac{1}{|\Omega_1|}\int_{\Omega_1} \rho(y)\, dy &>
\frac{1}{|\Omega_2|}\int_{\Omega_2} \rho(y)\, dy \\
\frac{P}{|\Omega_1|}\int_{\Omega_1} p_\rho(y)\, dy &>
\frac{P}{|\Omega_2|}\int_{\Omega_2} p_\rho(y)\, dy \\
\frac{1}{|\Omega_1|}\mathbb{P}(y \in \Omega_1) &>
\frac{1}{|\Omega_2|}\mathbb{P}(y \in \Omega_2).
\end{aligned}
\end{equation}
Therefore, \eqref{eq:P_ineq} holds.
\end{proof}

\begin{corollary}
\label{cor:DE_sampling2}
    Let $\rho_i : \Omega \to \mathbb{R}^+$ be population density functions defined on $\Omega$ and $y_i = f_{\rho_i}^{-1} (x)$ for $i = 1, 2$. Suppose there exists $\Omega' \subset \Omega$ such that $\rho_1 > \rho_2$ on $\Omega'$, $\rho_1 = \rho_2$ on $\Omega \backslash \Omega'$, and both $\Omega$ and $\Omega \backslash \Omega'$ are of positive measures, then
    \begin{equation}
        \mathbb{P}(y_1 \in \Omega') >
        \mathbb{P}(y_2 \in \Omega').
    \end{equation}
\end{corollary}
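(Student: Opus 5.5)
We would reduce the claim to Theorem~\ref{thm:DE_sampling}, which gives $y_i$ the density $p_{\rho_i} = \rho_i/P_i$ with $P_i = \int_\Omega \rho_i(x)\,dx$. Hence
\begin{equation*}
\mathbb{P}(y_i \in \Omega') = \frac{A_i}{A_i + B},
\end{equation*}
where $A_i = \int_{\Omega'} \rho_i(y)\,dy$ and $B = \int_{\Omega\setminus\Omega'} \rho_1(y)\,dy = \int_{\Omega\setminus\Omega'} \rho_2(y)\,dy$. The mass $B$ outside $\Omega'$ is common to both densities because $\rho_1=\rho_2$ on $\Omega\setminus\Omega'$. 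This reduces the statement to comparing the two fractions.

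Next we check the signs.
\begin{itemize}
\item Since $\rho_i>0$ and $\Omega\setminus\Omega'$ has positive measure, $B>0$.
\item For the strict inequality $A_1>A_2$ we need $\Omega'$ itself to have positive measure. The hypothesis as written only says that $\Omega$ and $\Omega\setminus\Omega'$ have positive measure, which we read as a typo for ``$\Omega'$ and $\Omega\setminus\Omega'$''; otherwise, if $|\Omega'|=0$, both probabilities vanish and the strict inequality fails. Granting $|\Omega'|>0$, the integrand $\rho_1-\rho_2$ is strictly positive on a set of positive measure, so $A_1 - A_2 = \int_{\Omega'}(\rho_1-\rho_2)\,dy > 0$.
\item Also $A_2>0$.
\end{itemize}

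Finally, the map $t\mapsto t/(t+B)$ has derivative $B/(t+B)^2>0$ for $t>0$, so it is strictly increasing. Applying it to $A_1>A_2>0$ gives $\mathbb{P}(y_1\in\Omega')>\mathbb{P}(y_2\in\Omega')$. Equivalently, one can cross-multiply: $A_1(A_2+B) - A_2(A_1+B) = B(A_1-A_2) > 0$.

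The only delicate point is the measure hypothesis, namely that both $\Omega'$ and its complement have positive measure, which is needed for strictness. The rest is routine once Theorem~\ref{thm:DE_sampling} is invoked; we also assume $\rho_i$ is integrable, so that each $P_i$ is finite.
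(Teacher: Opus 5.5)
Your proof is correct and is the paper's argument: both write $\mathbb{P}(y_i\in\Omega') = A_i/(A_i+B)$ with a common outside mass $B>0$ and conclude via $B(A_1-A_2)>0$ (the paper rewrites each fraction as $1 - B/(A_i+B)$, while you use monotonicity or cross-multiplication). You are also right to read the hypothesis as requiring $|\Omega'|>0$ rather than $|\Omega|>0$, since the paper uses this without comment when it asserts $P^1_{\Omega'} > P^2_{\Omega'} > 0$.
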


\begin{proof}
Let $P_{\Omega'}^i = \int_{\Omega'} \rho_i(y) \, dy$ and $P_{\Omega \backslash \Omega'}^i = \int_{\Omega \backslash \Omega'} \rho_i(y) \, dy$ for $i=1,2$. Note that 
\begin{gather}
    \label{eq:P_i_rel1}
    P_{\Omega'}^1 > P_{\Omega'}^2 >0, \\
    \label{eq:P_i_rel2}
    P_{\Omega \backslash \Omega'}^1 = P_{\Omega \backslash \Omega'}^2 >0. 
\end{gather}
For simiplicity, write $P_{\Omega \backslash \Omega'} = P_{\Omega \backslash \Omega'}^i$. Suppose $Y_i=f^{-1}(X)$ are the transposed random variables with probability density functions $p_{\rho_i}$ for $i=1,2$, then
\begin{equation}
\begin{aligned}
&\mathbb{P}(y_i \in \Omega')\\
&=\int_{\Omega'}p_{\rho_i}(y) \, dy \\
&= \frac{1}{P_{\Omega \backslash \Omega'} + P_{\Omega'}^i} \int_{\Omega'}\rho_i(y) \, dy \\
&= \frac{P_{\Omega'}^i}{P_{\Omega \backslash \Omega'} + P_{\Omega'}^i}.
\end{aligned}
\end{equation}
Hence, we have
\begin{equation}
\begin{aligned}
&\mathbb{P}(y_1 \in \Omega') - \mathbb{P}(y_2 \in \Omega') \\
&=
\frac{P_{\Omega'}^1}{P_{\Omega \backslash \Omega'} + P_{\Omega'}^1} - \frac{P_{\Omega'}^2}{P_{\Omega \backslash \Omega'} + P_{\Omega'}^2} \\
&= \left(1 -\frac{P_{\Omega \backslash \Omega'}}{P_{\Omega \backslash \Omega'} + P_{\Omega'}^1} \right) - \left(1 -\frac{P_{\Omega \backslash \Omega'}}{P_{\Omega \backslash \Omega'} + P_{\Omega'}^2} \right) \\
&= P_{\Omega \backslash \Omega'} \left( \frac{1}{P_{\Omega \backslash \Omega'} + P_{\Omega'}^2} - \frac{1}{P_{\Omega \backslash \Omega'} + P_{\Omega'}^1} \right) \\
&>0 \qquad\text{by \eqref{eq:P_i_rel1}, \eqref{eq:P_i_rel2}}.
\end{aligned}
\end{equation}
Therefore, we have $\mathbb{P}(y_1 \in \Omega') > \mathbb{P}(y_2 \in \Omega')$.
\end{proof}

Theorem \ref{thm:DE_sampling} provides the building block for achieving task-aware sampling through the density-equalizing mapping. Starting from a uniform grid defining the image domain in a discrete sense, the mapping $f_\rho$ redistributes sampling locations according to the learned density $\rho$, which encodes spatial importance. As a result, salient features are sampled with higher probability, whereas non-salient feature points are less likely to be captured. Corollary \ref{cor:DE_sampling1} generalizes this idea from pointwise probabilities to regional behavior, illustrating that regions with higher importance are assigned more sampling points, while less informative regions are sparsely sampled. To further analyze how sampling probabilities vary within a single region under different density functions, Corollary \ref{cor:DE_sampling2} states that an increase in density values of a predefined region will lead to more sampling points in that region, if the density of other regions is kept constant. Most importantly, the total number of sampling points remains unchanged; only their spatial distribution is adapted. 

These properties are particularly beneficial for feature extraction in neural networks. Since convolution operates over a fixed number of sampling locations (i.e., a fixed receptive field size and computational budget), the ability to reallocate these sampling points allows the model to focus its capacity on more informative regions without increasing complexity. In effect, the receptive field becomes spatially adaptive: it is denser in important regions and coarser elsewhere.

From this perspective, learning the density function $\rho$ is equivalent to learning how to optimally distribute sampling resources for a given task. Rather than uniformly covering the domain, the model concentrates its feature extraction on regions that contribute most to the task objective. This leads to more meaningful feature representations, as important structures are captured with higher resolution, while redundant areas are compressed. Based on this principle, we step into the details about the architecture of the overall architecture.

\section{Density-Equalizing Convolutional Neural Network}

\begin{figure*}[t]
    \centering
    \includegraphics[width=0.9\textwidth]{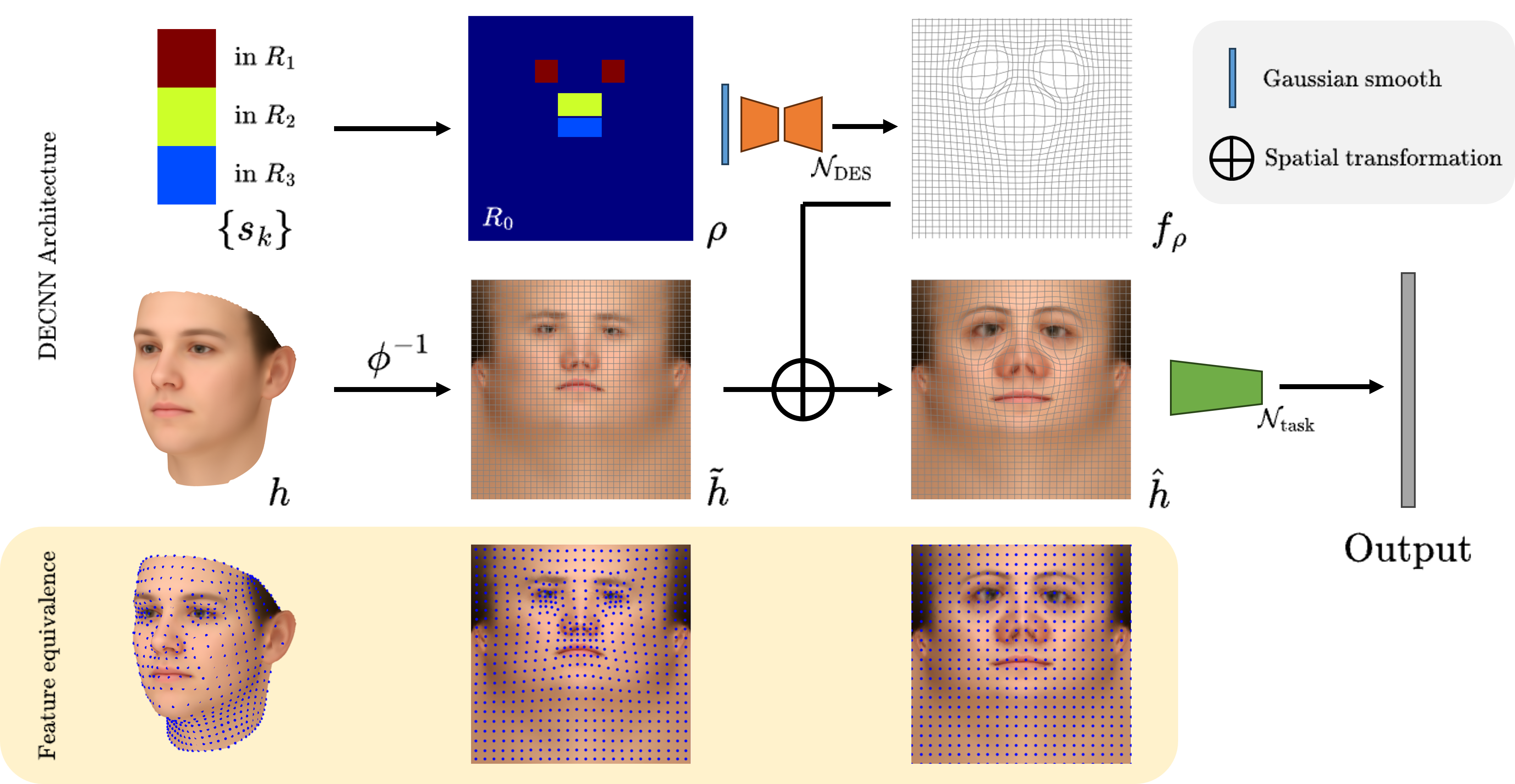}
    \caption{DECNN is constructed by 3 main steps: (1) constructing a DE mapping $f_\rho$ based on estimated density parameters $\{s_k'\}$ using a pretrained solver $\mathcal{N}_\text{DES}$, (2) transforming the parameterized 2D input into a domain focusing on salient regions by $f_\rho$, and (3) applying a 2D CNN $\mathcal{N}_\text{task}$. The last row illustrates that these operations are mathematically equivalent to sampling more feature points in those regions on the manifold.}
    \label{fig:decnn}
\end{figure*}

Starting from the defined Density-Equalizing Convolution (DEC), in this section, we introduce the Density-Equalizing Convolutional Neural Network (DECNN), which is capable of performing adaptive sampling during learning tasks on simply-connected open surfaces, as summarized in Fig~\ref{fig:decnn}. We will begin by discussing the generation of the density-equalizing mapping, then how to conveniently apply the corresponding convolution operator to form a DEC Layer with the help of widely developed 2D convolution, and finally, integrate these layers into a complete network.

\subsection{Density-Equalizing Mapping Solution}
\label{sec:de_map_sol}

In this work, the central component is the transformation induced by a density-equalizing mapping. For both images and surfaces, this mapping is constructed in the parameterized domain from a learned density function $\rho : \Omega \rightarrow \mathbb{R}^+$ that explicitly encodes the relative importance of different spatial regions. Intuitively, regions with higher density are treated as more informative and are therefore expanded under the transformation, while less relevant regions are compressed. The resulting mapping redistributes the domain such that the transformed density becomes approximately equal. After such a transformation, applying a general 2D convolution on the deformed image is equivalent to applying a deformable convolution induced by density-equalizing convolution on the original parameterized domain indicated by Remark~\ref{rmk:dec=2d}.

To make this mechanism task-adaptive, we introduce a single shared density function that governs the transformation across the entire dataset. This shared structure enables the model to consistently emphasize task-relevant regions while preserving a coherent global deformation. Specifically, as illustrated in Fig~\ref{fig:decnn}, we first parameterize each manifold image $h$ by a mapping $\phi$ yielding a parameterized image $\tilde{h} = h \circ \phi$ defined on $\Omega$, then construct a density-equalizing mapping $f_\rho$ from the density function $\rho$, which is used to deform $\tilde{h}$. By learning $\rho$ from data and sharing it across the dataset, the model can adaptively modulate the deformation to emphasize informative structures in a task-dependent manner. 

Furthermore, supported by Remark~\ref{rmk:dec=2d}, performing standard 2D convolution on a regular grid over the transformed image $\hat{h}$ is equivalent to applying convolution on $\tilde{h}$ with a task-adaptive sampling pattern induced by $f_\rho$. In this view, together with Theorem~\ref{thm:DE_sampling}, the receptive field is effectively deformed according to the learned density in a probability sense, allowing the convolution to focus more on informative regions while maintaining global consistency.

While the proposed framework is designed to handle smooth density functions and deformation mappings, many practical datasets are naturally composed of semantically distinct regions. For instance, in medical imaging, anatomical structures partition the image into regions with different functional roles, while in facial analysis, a face can be decomposed into components such as the eyes, nose, mouth, and surrounding background.

Incorporating such partitions derived from prior domain knowledge, particularly in medical applications, not only leads to more interpretable saliency maps that better reflect the underlying task or diagnosis, but also reduces the degrees of freedom in the learning process. This structured constraint, in turn, facilitates more efficient optimization when estimating the density function.

To incorporate such prior knowledge, we introduce a region prior $\mathcal{R}$ forming a partition of the domain $\Omega$,
\begin{equation}
\Omega = \cup_{k=0}^K R_k.
\label{eq:partition}
\end{equation}
We model the density function $\hat{\rho}$ as a piecewise constant function with respect to this partition, such that
\begin{equation}
\hat{\rho} := \sum\limits_{k=0}^K s_k \boldsymbol{1}_{{R_k}},
\end{equation}
where each $s_k \in \mathbb{R}^+$ is a optimizable constant associated with the fixed region $R_k$ according to prior knowledge. $\boldsymbol{1}_{{R_k}}$ is the indicator function for region $R_k$.

To build up the density function, which is a measure of importance, we expect the informative regions $R_k$ to correspond to a large density value $s_k \gg s_0$, where $R_0$ is set to be the background region. To exaggerate the difference between the informative and the remaining that are relatively less important, we apply a softmax-like transformation:
\begin{equation}
\begin{aligned}
    s'_k &:= \frac{M e^{s_k}}{\sum_{j=1}^K e^{s_j}}+1, \\
    \hat{\rho}'&:= \sum\limits_{k=0}^K s'_k \boldsymbol{1}_{{R_k}},
\end{aligned}
\end{equation}
where $M$ is a positive learnable sizing coefficient. 

While the piecewise constant density function provides an efficient representation aligned with practical priors, the construction of the density-equalizing mapping in Section~\ref{sec:de_map} requires a sufficiently smooth density. To this end, we regularize the piecewise constant function by applying a Gaussian filter. Specifically, we smooth the intermediate density $\hat{\rho}'$ to obtain a continuous density function $\rho$, which is then used to compute the density-equalizing mapping.

To find the corresponding density-equalizing mapping $f_\rho$, we apply the Density-Equalizing Solver (DES) based on the mapping solver in \cite{huang2025learning}. The network structure of DES is shown in Fig~\ref{fig:des}. The loss function is given by
\begin{equation}\label{loss_estimator}
    \mathcal{L}_\text{DES} = \lambda_\rho \mathcal{L}_\rho + \lambda_\text{BC} \mathcal{L}_\text{BC},
\end{equation}
where 
\begin{equation}
\begin{aligned}
    \mathcal{L}_\rho = \text{Var}(\rho)\\
    \mathcal{L}_\text{BC} = \|\mu_{f_\rho}\|_2^2
\end{aligned}
\end{equation}
and $\lambda_\rho$, $\lambda_\text{BC}$ are the corresponding weightings. The settings $\lambda_\rho = 1$, $\lambda_\text{BC} = 0.8$ are used for training the DES for this work. With the losses, the mapping is quasi-conformal with a careful control of $\lambda_\text{BC}$ so that $\|\mu_f\|^2<1$, encouraging the bijectivity. 

\begin{figure}[b]
    \centering
    \includegraphics[width=0.48\textwidth]{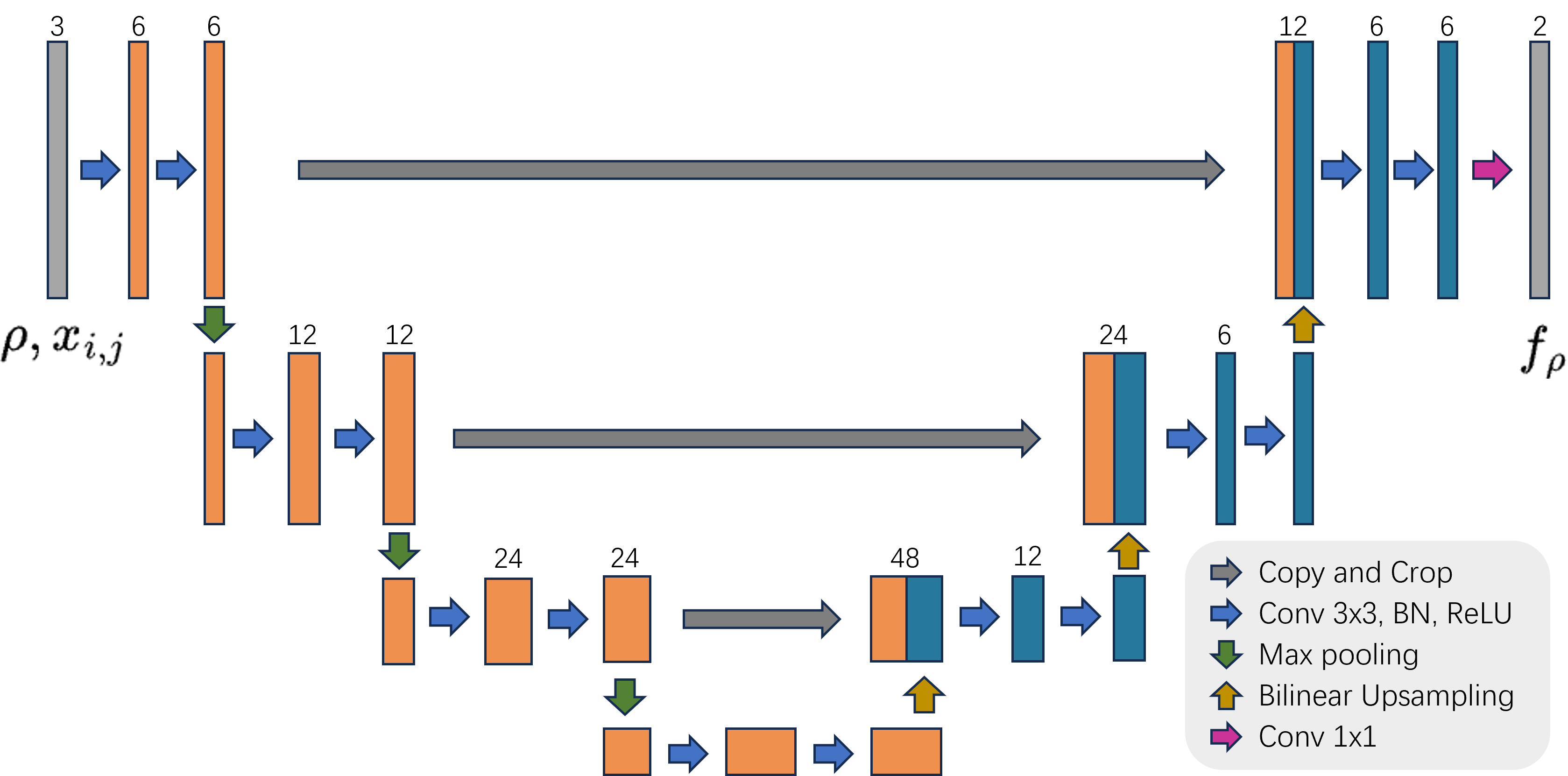}
    \caption{The network structure of Density-Equalizing Solver (DES).}
    \label{fig:des}
\end{figure}

\subsection{The Overall Model}

Then, we describe the overall model of the proposed framework, as summarized in Fig~\ref{fig:decnn}. We denote the collection of input signals for a given task by $\{h_i\}_{i=1}^Q$.

As introduced in Section~\ref{sec:de_map_sol}, a single shared density function $\rho$ is constructed from a set of trainable scalars $\{s_k\}$ defined piecewise over the partition $\{R_k\}$. A pretrained Density-Equalizing Solver ($\mathcal{N}_\text{DES}$) is then employed to the Gaussian smoothed function $\rho$ to compute the corresponding mapping:
\begin{equation}
\label{eq:gen_de_map}
    f_\rho = \mathcal{N}_\text{DES}(\rho).
\end{equation}

In parallel, each surface signal $h_i$ is parameterized via a mapping $\phi_i$, yielding the pullback function $\tilde{h}_i = h_i \circ \phi_i$ defined on $\Omega$. Each parameterization $\phi_i$ is enforced to align the regions on the manifold with the partition $\{R_k\}_{k=0}^K$ in the parameterized domain. We then compute the transformed function
\begin{equation}
    \hat{h}_i = \tilde{h}_i \circ f_\rho^{-1},
\end{equation}
which can be interpreted as a 2D warped representation of the original signal. As discussed in Remark~\ref{rmk:dec=2d}, performing a standard 2D convolution on the deformed image $\hat{h}_i$ is equivalent to applying a $\rho$-induced DEC on the parameterized signal $\tilde{h}_i$, whose receptive field is transformed by the density-equalizing mapping $f_\rho$, which is again equivalent to perform a deformable convolution over the original manifold image $h$.

Finally, for a given task, we denote the downstream network by $\mathcal{N}_\text{task}$, which produces the output
\begin{equation}
    \label{eq:task_net}
    y_i = \mathcal{N}_\text{task}(\hat{h}_i; \boldsymbol{\theta}),
\end{equation}
where $\boldsymbol{\theta}$ denotes the trainable parameters. $\mathcal{N}_\text{task}$ can be any 2D CNN with a task-dependent structure since $\hat{h}$ is planar. 
Similar to~\cite{zhang2025quasi}, a sequence of DEC can be shown to be equivalent to a single density-equalizing mapping applied at the input stage. The details are omitted here and deferred to Appendix~\ref{app:multi_dec}. 

To summarize, Fig~\ref{fig:decnn} presents the complete framework of DECNN, which is composed of three main stages. First, a density function $\rho$ is constructed from the estimated density parameters, and the associated DE mapping $f_\rho$ is computed using a pretrained solver $\mathcal{N}_\text{DES}$, as described in Section~\ref{sec:de_map_sol} and Eq~\eqref{eq:gen_de_map}. Next, the function $h$, originally defined on the surface, is conformally mapped onto a planar domain to obtain $\tilde{h}$, and subsequently deformed into a wrapped planar image $\hat{h}$. Finally, a task-dependent CNN is applied to the wrapped image $\hat{h}$ according to Eq~\eqref{eq:task_net}, producing the final output. The figure also highlights the equivalence between deformation and feature extraction throughout the processing pipeline, as established in Section \ref{sec:opt_disc}.

\subsection{The Training Process}

To train the model, we define a task-specific loss function $\mathcal{L}_\text{task}$, such as cross-entropy loss for classification or mean squared error for regression or segmentation. Since the density-equalizing solver is pretrained to produce stable mappings, we do not impose additional regularization directly on $f_\rho$. Instead, we regularize the density function and define the overall objective as
\begin{equation}
\begin{aligned}
\label{eq:loss}
    \mathcal{L}(\rho, \tilde{h}_{1:Q}; \boldsymbol{\theta})
    =\;& \lambda_\text{task} \mathcal{L}_\text{task}(\hat{h}_{1:Q}; \boldsymbol{\theta}) \\
    &+ \lambda_\text{peak} \mathcal{L}_\text{peak}(\rho)
    + \lambda_\text{scale} \mathcal{L}_\text{scale}(M),
\end{aligned}
\end{equation}
where $\lambda_\text{task}$, $\lambda_\text{peak}$, and $\lambda_\text{scale}$ are weighting coefficients that balance the contributions of each term. The regularization terms are defined as
\begin{equation}
\label{eq:p_loss}
\begin{aligned}
    \mathcal{L}_\text{peak}(\rho) &= \frac{1-\|\rho\|_\infty}{M}, \\
    \mathcal{L}_\text{scale}(M) &= M,
\end{aligned}
\end{equation}
where $\mathcal{L}_\text{peak}$ encourages the density to concentrate, while $\mathcal{L}_\text{scale}$ penalizes large scaling factors to promote smoother density-equalizing mappings.

The model is trained via gradient-based optimization, jointly learning the downstream network parameters $\boldsymbol{\theta}$ and the density parameters $\{s_k\}$ that define $\rho$. As a result, the framework simultaneously performs the target task while learning the density function as a shared saliency map that reflects the importance of different regions.

\section{Experiments}
\label{sec:experiments}
This section details on our experiments. We demonstrate the performance of the proposed method in a MNIST classification task and craniofacial analysis problem.

\subsection{Experimental Setting}
%
The training is conducted on a computing server equipped with four Intel Xeon Gold 6252 24-core CPUs and two NVIDIA A40 Tensor Core GPUs.
%
The loss function is given by Eq~\eqref{eq:loss},
%
%
where we set $\lambda_\text{task} = 1$, $\lambda_\text{peak} = 10^{-4}$, and $\lambda_\text{scale} = 10^{-6}$.
The training parameters are set as follows: learning rate is $1.0 \times 10^{-5}$, batch size is $5000$ if not specified.
%
The baseline model, CNN, is introduced to compare classification accuracy, where there is no geometric adjustment.

\subsection{Task-aware Sampling in Image Classification}

\begin{figure}[t]
    \centering
    \includegraphics[width=0.45\textwidth]{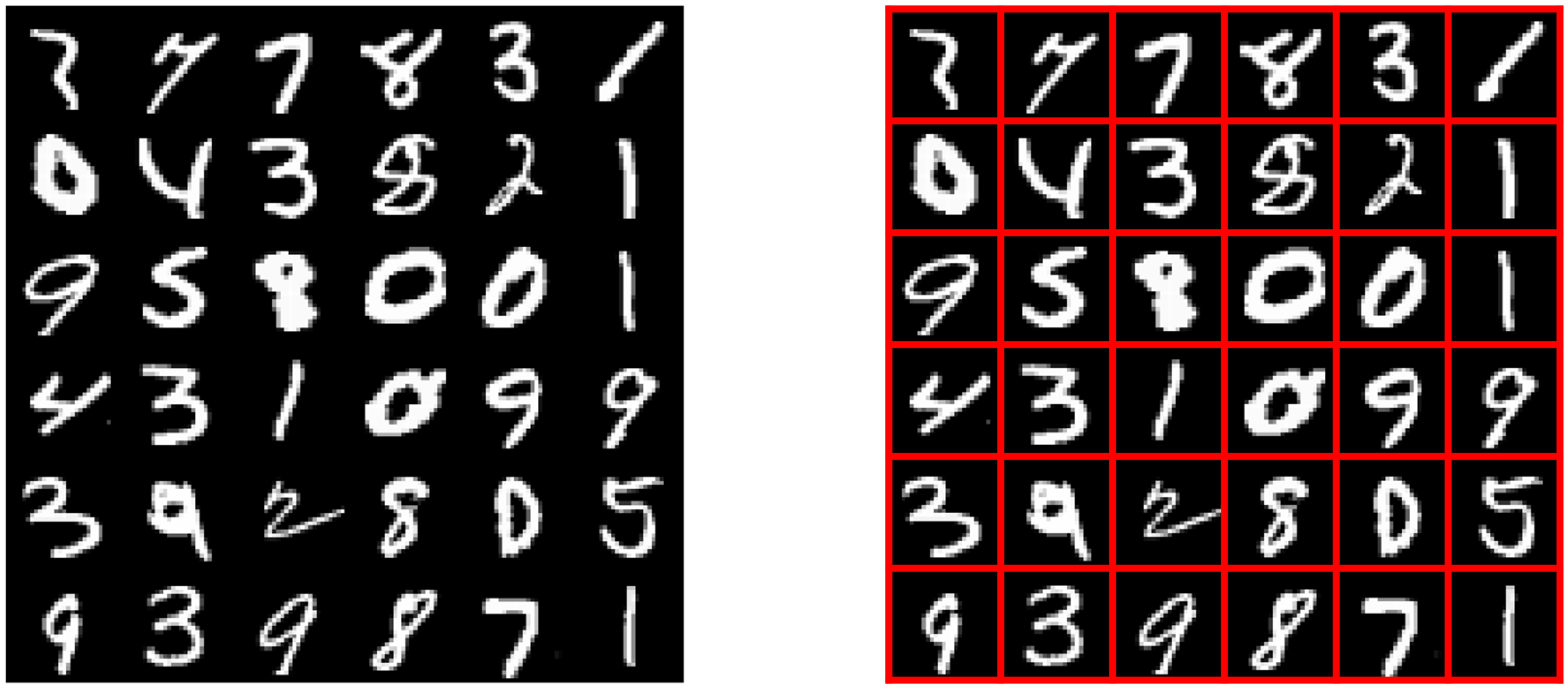}
    \caption{MNIST digits in a $6 \times 6$ grid with a region prior $\mathcal{R}$.}
    \label{fig:mnist_init}
\end{figure}
\begin{figure}[b]
    \centering
    \includegraphics[width=0.48\textwidth]{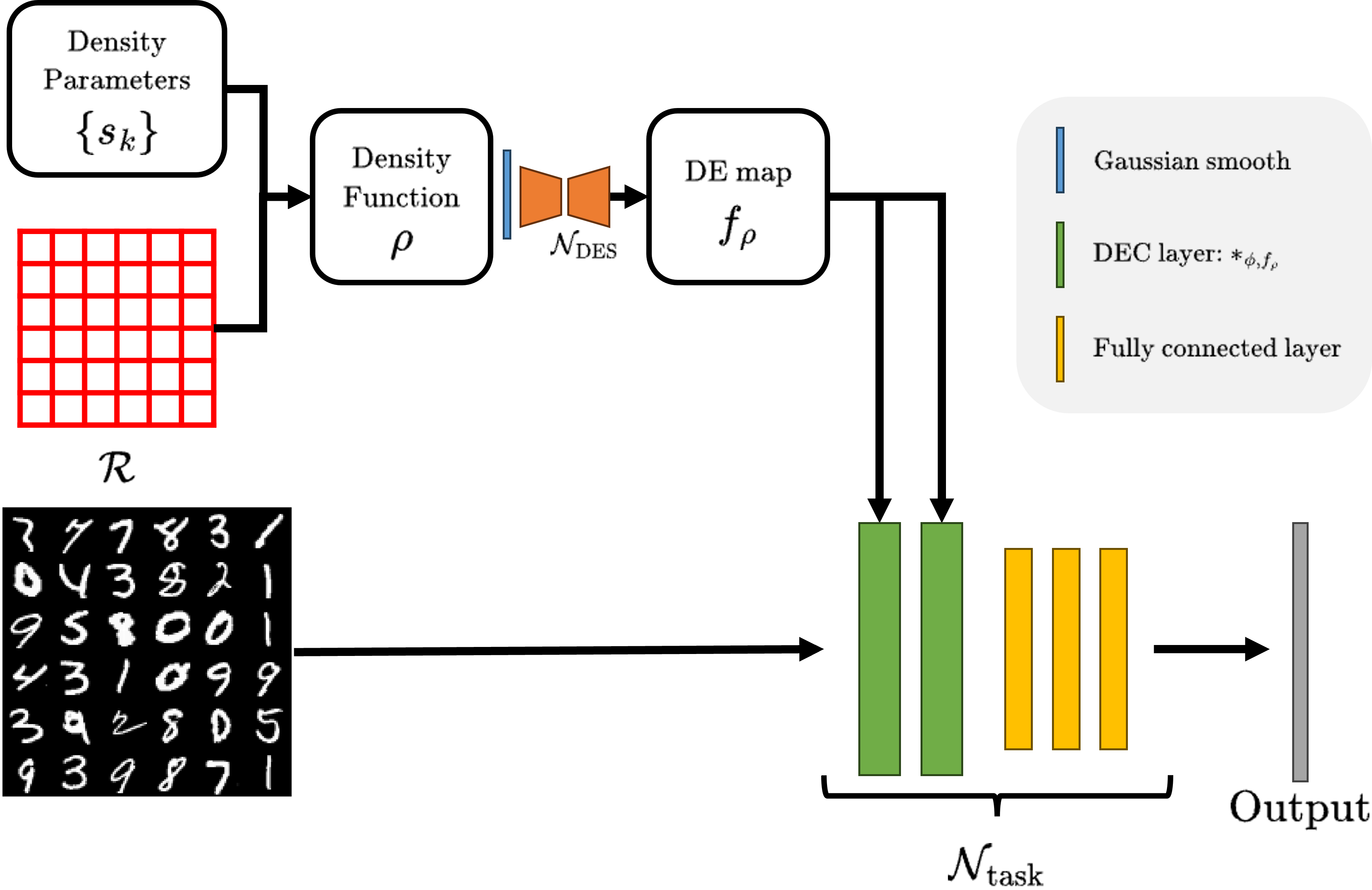}
    \caption{DECNN network architecture applied in the MNIST classification task.}
    \label{fig:mnist_net}
\end{figure}

In this section, we evaluate the performance of our proposed methods on image classification tasks involving MNIST digits. The MNIST dataset consists of 60,000 handwritten digits spanning ten classes (0-9), with image size $28 \times 28$. To evaluate the effectiveness of DECNN, we embed each digit image into the partitioned grid illustrated in Fig~\ref{fig:mnist_init}, where each block corresponds to a partitioned region defined in Eq~\eqref{eq:partition}. The resulting images are then used as inputs to the model.

The network architecture is illustrated in Fig~\ref{fig:mnist_net}. Initially, a smoothed piecewise density function $\rho$ is produced using the region prior $\mathcal{R}$, the density parameters $\{s_k\}$, and the activation function in Eq~\eqref{eq:p_loss} as described in Section~\ref{sec:de_map_sol}. Subsequently, the input source image, sized $N_0 \times N_0$, is deformed with a density-equalizing mapping $f_\rho$, generated by the pretrained Density-Equalizing Solver $\mathcal{N}_{DES}$ using the density function $\rho$. This deformed image is then subsampled into a size of $\lceil N_0/d \rceil \times \lceil N_0/d \rceil$ to serve as the input feature map for the classifier $\mathcal{N}_\text{task}$, where $d \in [1,2,3,4,5,6]$ is the sampling ratio. The classifier $\mathcal{N}_\text{task}$ consists of two convolutional layers and three fully connected layers. During the training process, we first train the density parameters $\{s_k\}$ and the classifier $\mathcal{N}_\text{task}$ together for 1000 epochs, followed by an additional 1000 epochs during which only the classifier $\mathcal{N}_\text{task}$ is trained. Eq~\eqref{eq:loss} is used as the loss function for this classification task, where the task loss $\mathcal{L}_\text{task}$ is defined as the cross entropy loss for classification.


Since the target label corresponds to only one digit in one grid block, the model is expected to identify the corresponding region as the key region and thus sample more feature points there. For example, if the label is designated as the top-left digit, then the key region should be the top-left area of the grid. In this experiment, we select the center square as the target region to minimize errors introduced by boundary conditions in density-equalizing mappings. 

We conduct the classification experiments with grid sizes ranging from $3 \times 3$ to $6 \times 6$ and sampling ratios $d$ ranging from 1 to 6. To evaluate the performance, the density ratio indicating the relative density value $r_{s'} = s'_\text{TR}/ \max \{s'_\text{NTR}\}$ is used, where $s'_\text{TR}$ is the predicted density for the target region, and $s'_\text{NTR}$ can be the predicted densities for non-target regions excluding the background.

\begin{table*}[t]
    \centering
    \begin{tabular}{C|c|CcC|BB|r}
    \toprule
    \multicolumn{2}{c|}{Setup}& \multicolumn{3}{c|}{Density values} & \multicolumn{2}{c|}{Accuracy (\%)} & \multicolumn{1}{c}{\% of}  \\
    Grid & $d$ & $s'_\text{TR}$ & Range$(\{s'_\text{NTR}\})$ & $r_{s'}$ & CNN & DECNN & Param. \\
    \hline
    \hline
    \multirow{6}{*}{$3 \times 3$} & 1 & 2.8731 & [1.0101, 1.1607] & 2.4752 & \textbf{98.92} & 98.68 & 100.00 \\
    & 2 & 3.6932 & [1.0018, 1.2939] & 2.8542 & 98.50 & \textbf{98.78} & 19.29 \\
    & 3 & 4.2281 & [1.0075, 1.5936] & 2.6532 & 96.12 & \textbf{98.57} & 8.70 \\
    & 4 & 4.7776 & [1.0000, 2.0305] & 2.3529 & 88.52 & \textbf{97.32} & 4.35 \\
    & 5 & 5.4743 & [1.0044, 1.9581] & 2.7957 & 77.62 & \textbf{95.37} & 2.99 \\
    & 6 & 4.6347 & [1.0000, 1.2679] & 3.6555 & 66.90 & \textbf{91.92} & 2.17 \\
    \hline
    \multirow{6}{*}{$4 \times 4$} & 1 & 4.7192 & [1.0000, 1.8355] & 2.5711 & \textbf{98.86} & 98.69 & 100.00 \\
    & 2 & 4.4858 & [1.0000, 1.1350] & 3.9522 & 98.48 & \textbf{98.76} & 22.11 \\
    & 3 & 4.5527 & [1.0000, 1.1667] & 3.9022 & 95.93 & \textbf{98.65} & 8.20 \\
    & 4 & 5.1839 & [1.0000, 1.3590] & 3.8144 & 88.85 & \textbf{97.26} & 4.69 \\
    & 5 & 5.0229 & [1.0000, 1.5491] & 3.2424 & 77.64 & \textbf{96.08} & 2.34 \\
    & 6 & 6.7449 & [1.0000, 1.4958] & 4.5092 & 65.34 & \textbf{93.50} & 1.61 \\
    \hline
    \multirow{6}{*}{$5 \times 5$} & 1 & 2.4509 & [1.0002, 1.1279] & 2.1729 & \textbf{98.90} & 98.63 & 100.00 \\
    & 2 & 2.6048 & [1.0012, 1.0600] & 2.4572 & 98.40 & \textbf{98.67} & 21.17 \\
    & 3 & 3.6003 & [1.0017, 1.1832] & 3.0428 & 95.69 & \textbf{98.25} & 8.03 \\
    & 4 & 3.4711 & [1.0066, 1.2014] & 2.8892 & 88.45 & \textbf{96.58} & 3.92 \\
    & 5 & 4.0144 & [1.0025, 1.2515] & 3.2076 & 76.08 & \textbf{95.20} & 2.92 \\
    & 6 & 4.3193 & [1.0027, 1.4486] & 2.9817 & 65.33 & \textbf{92.87} & 2.10 \\
    \hline
    \multirow{6}{*}{$6 \times 6$} & 1 & 3.0909 & [1.0000, 1.4946] & 2.0681 & \textbf{98.99} & 98.63 & 100.00 \\
    & 2 & 2.9882 & [1.0000, 1.3470] & 2.2184 & 98.53 & \textbf{98.65} & 22.90 \\
    & 3 & 3.0000 & [1.0000, 1.1742] & 2.5549 & 95.47 & \textbf{97.93} & 9.40 \\
    & 4 & 3.5289 & [1.0003, 1.1007] & 3.2061 & 87.76 & \textbf{97.33} & 4.42 \\
    & 5 & 3.4462 & [1.0000, 1.0917] & 3.1567 & 78.02 & \textbf{95.37} & 2.68 \\
    & 6 & 3.6990 & [1.0001, 1.0797] & 3.4259 & 64.00 & \textbf{90.20} & 1.99 \\
    \hline
    \bottomrule
    \end{tabular}
    \caption{Quantitative results of MNIST image classification: predicted density $s_k'$ in target regions (TR) and non-target regions (NTR), the density ratio $r_{s'}$, accuracies of CNN and DECNN, and \% of parameters.}
    \label{tb:mnist_s_krd}
\end{table*}

\begin{figure*}[t]
    \centering
    \includegraphics[width=.95\textwidth]{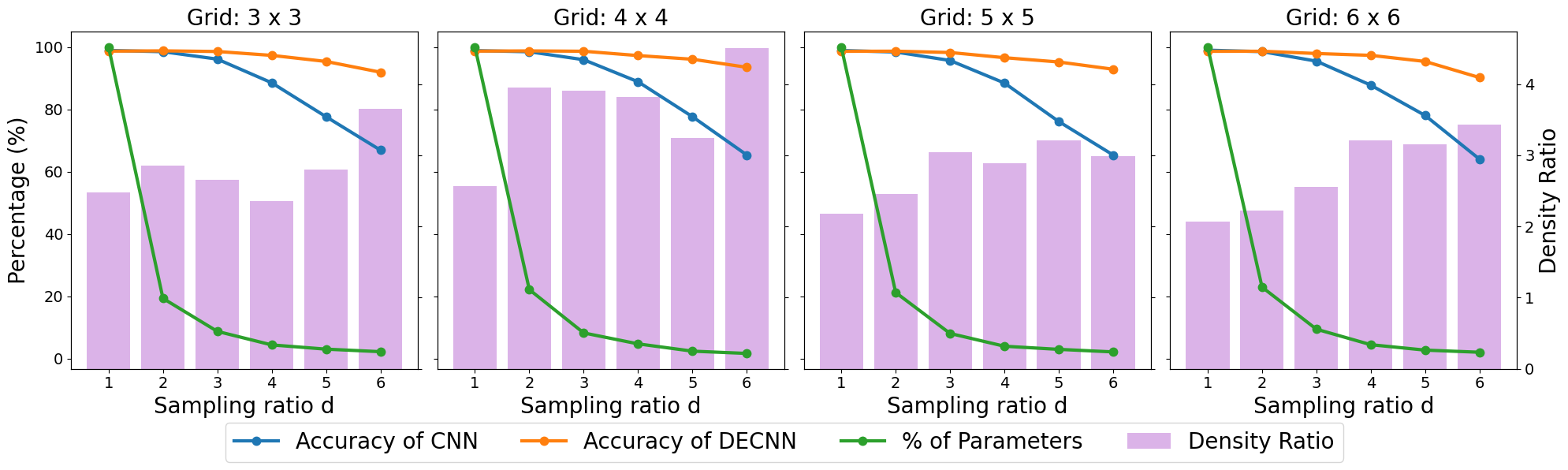}
    \caption{Graphical result of MNIST image classification: density ratio $r_{s'}$, accuracies of CNN and DECNN, and \% of parameters.}
    \label{fig:mnist_result_chart_c2}
\end{figure*}

Notably, as shown in Table~\ref{tb:mnist_s_krd}, large predicted density ratios ($r_{s'} > 2$) are achieved, indicating a clear distinction of the target regions to the rest, across all setups. By Theorem \ref{thm:DE_sampling}, A high density ratio $r_{s'}$ allows a denser sampling in the target region, capturing more informative feature points to facilitate successive learning. Besides, it can be observed that the produced density values in most target regions exceed 2.5, whereas those in the remaining areas are generally below 1.5. This clear separation suggests that the target regions are consistently enlarged across different setups, indicating that the deformation selectively emphasizes areas of interest.

To illustrate the advantages conferred by the optimized transformation based on the inferred density function, we compare the accuracies achieved by CNN and DECNN. Since both models take the same resolution of input images, they contain roughly the same number of learnable parameters at each sampling ratio. Table~\ref{tb:mnist_s_krd} reports the classification accuracies for each experimental configuration, together with the number of learnable parameters expressed as a percentage of the largest model (those with $d=1$) for each partitioning grid size. As the sampling ratio $d$ increases, the number of learnable parameters decreases sharply across all grid settings. Despite this substantial reduction, DECNN maintains an accuracy above 90\% in every setup, even at the highest sampling ratio ($d=6$), where it uses only about 2\% of the parameters required by the corresponding largest models ($d=1$).

By contrast, the standard CNN exhibits a marked decline in performance as the sampling ratio increases, with accuracy dropping below 65\% due to the absence of an optimized sampling mechanism. To retain an accuracy above 90\%, the CNN requires a sampling ratio no greater than 3. In comparison, the proposed DECNN continues to achieve strong performance even when the sampling ratio reaches $d=6$. These results indicate that DECNN's adaptive convolution effectively extracts the most important information from the high-resolution source data, surpassing the baseline CNN model. 

For clearer presentation of the experimental results, these values are illustrated in Fig~\ref{fig:mnist_result_chart_c2}. Four separate charts are provided, each corresponding to a different grid configuration. As the sampling ratio ($x$-axis) increases, the number of parameters (green curves) significantly decreases across all graphs. 
At the same time, the accuracy of CNNs (blue curves) declines sharply, whereas DECNN (orange curves) shows a minor decrease across all configurations. Meanwhile, the resulted density ratio (purple bars) exhibits a slight upward trend as the sampling ratio increases. This behavior reflects the growing need for optimized feature extraction when fewer feature points are available.
Since the boundary conditions of density-equalizing mappings impose weaker deformation constraints on interior regions in larger grids, this effect becomes more noticeable as the grid size increases.

A visualization of the results for the setup with a grid size of $6 \times 6$ and a sampling ratio of $d=6$ is shown in Fig~\ref{fig:mnist_result}. The top-left image displays the original source image under the identity mapping. Without deformation, as illustrated in the bottom-left image, the feature image is uniformly subsampled across the entire domain, causing important structures to be severely degraded and making the digit difficult to distinguish. In contrast, the right column shows the effect of the learned density-equalizing mapping. The source image is adaptively deformed to produce the warped image shown in the top-right panel, where the informative region is automatically emphasized despite the absence of any region-level supervision during training. After the same subsampling process is applied, the resulting image in the bottom-right panel still preserves the recognizable structure of the digit ``8''. This demonstrates the geometric mechanism of the proposed architecture, where informative regions are enlarged through the learned DE transformation, enabling more effective feature extraction under limited sampling resolution.

\begin{figure}[t]
    \centering
    \includegraphics[width=0.45\textwidth]{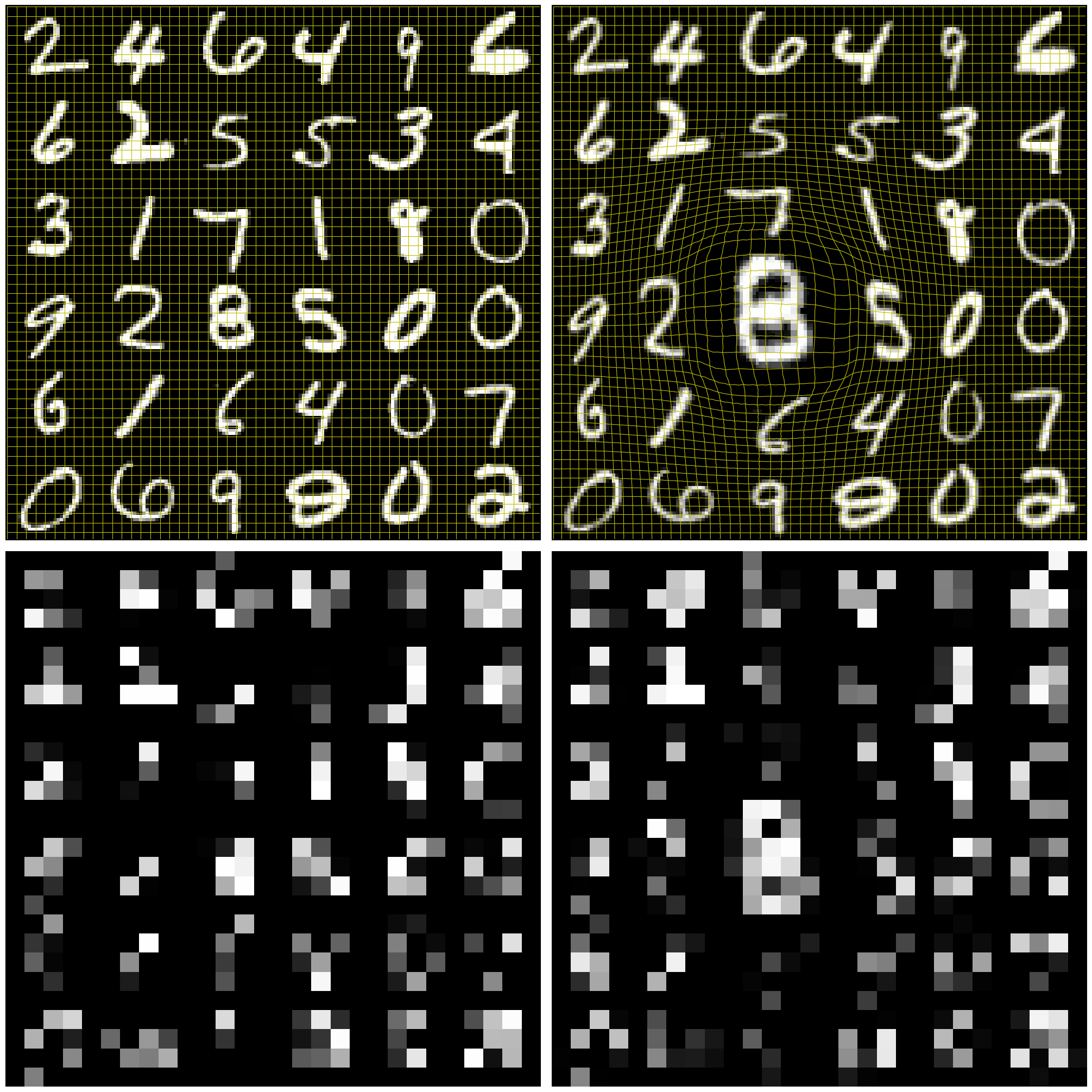}
    \caption{Visualization of result for a $6 \times 6$ grid with $d=6$.}
    \label{fig:mnist_result}
\end{figure}



\subsection{Craniofacial Analysis}
\label{sec:facial}

\begin{figure}[b]
    \centering
    \includegraphics[width=0.45\textwidth]{im/faceosa_example.png}
    \caption{Craniofacial analysis examples for faces with different nasal structures.}
    \label{fig:faceosa}
\end{figure}

\begin{figure}[t]
    \centering
    \includegraphics[width=0.48\textwidth]{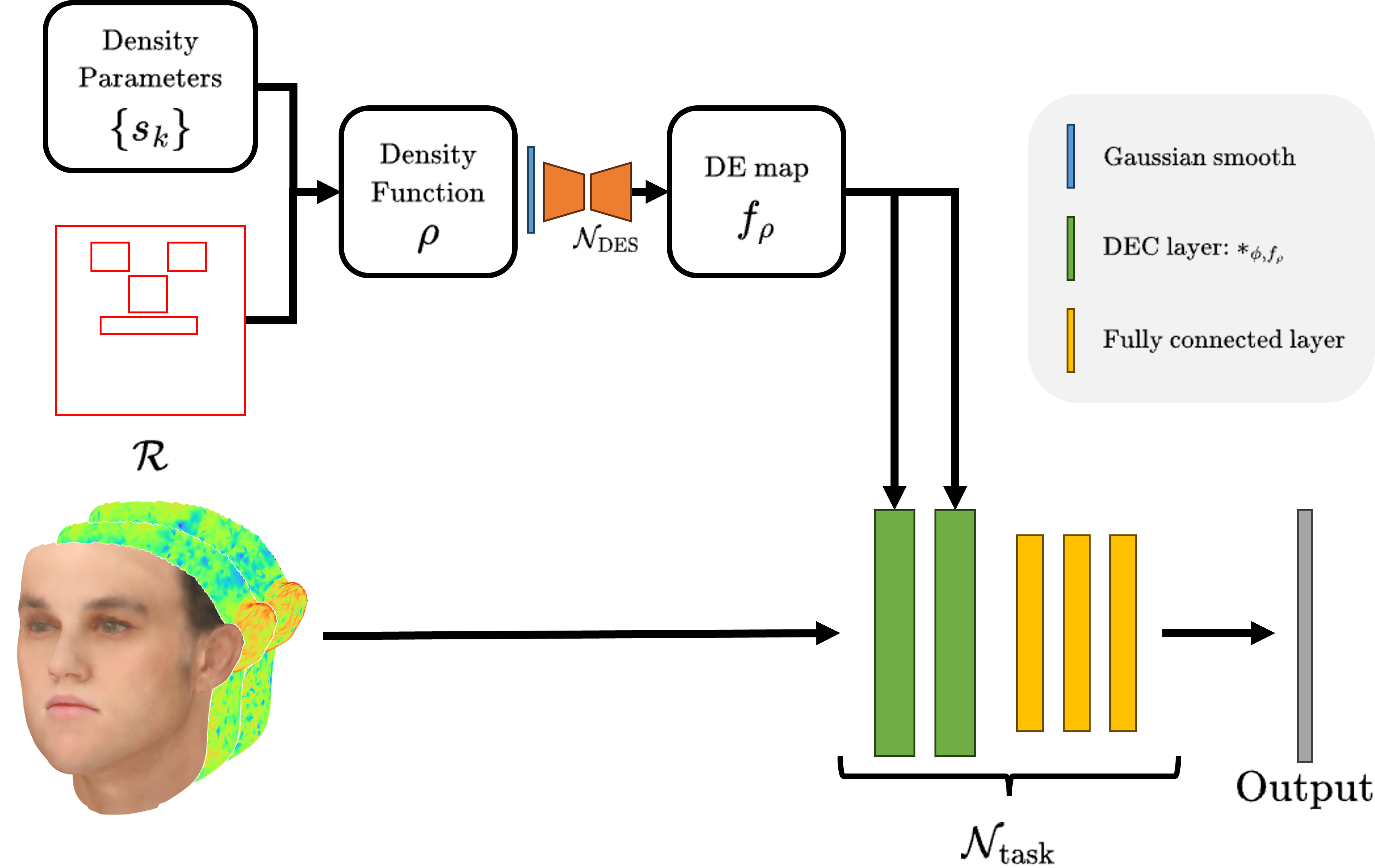}
    \caption{Craniofacial Analysis Network Architecture.}
    \label{fig:osa_net}
\end{figure}

Craniofacial analysis is a multi-disciplinary study of the structural, architectural, and anatomical features of the human head and face, with important applications in orthodontics, orthognathic surgery, syndrome diagnosis, forensic science, and medical image analysis. For instance, nasal structures can be investigated to evaluate facial asymmetries, plan surgical interventions, and provide essential clinical indicators for various orthodontic treatments \cite{nehra2009nasal, burstone1958integumental} and disease diagnoses \cite{matthews2022static}. To better analyze these data using machine learning models, extracting more feature points from informative regions is essential, especially when those regions are relatively small compared to the whole domain. Besides, some diseases may produce distinct structural patterns that are not yet recognized by medical professionals. In such cases, our proposed model could identify key regions critical for doctors to observe, thereby aiding in the determination of whether a patient possesses the disease. This capability is particularly valuable in clinical practice.

Three datasets of 2D human facial surfaces are used, comprising two groups with 500 subjects each. Each surface consists of 1024 vertices and exhibits distinct structural patterns in fixed sensory organs, which may include variations in size, symmetry, and other characteristics. The corresponding sensory organs for the three datasets are the pair of eyes, the nose, and the mouth, respectively. In each dataset, 70\% of the samples are used for training, while the remaining 30\% are reserved for testing. Representative examples from the two groups, highlighting different nasal structural patterns, are presented in Fig~\ref{fig:faceosa}. These three regions possess distinctive shapes: while the nose can be enclosed by a rectangle, the mouth is bounded by one with a higher width-to-height ratio, and the eyes are represented by two disconnected squares.


For surface images, it is necessary to first parameterize each onto a 2D domain. As mentioned in Section~\ref{sec:opt_disc}, we flatten the facial mesh using conformal parameterization \cite{meng2016conformal}. In this experiment, we map the meshes into images of size $256 \times 256$, with subsequent subsampling reducing the resolution to $32 \times 32$. To retain 3D geometric information, Gaussian curvature and mean curvature are computed \cite{Dastan2025} and mapped, along with the texture information, to the 2D image, which serves as the high-resolution input for the network. After the preprocessing stage, the setup remains consistent with the previous MNIST experiment to ensure a straightforward workflow. The classifier consists of two convolutional layers and three fully connected layers, as depicted in Fig~\ref{fig:osa_net}, and the region prior frames three regions: the eyes, nose, and mouth.

A batch size of 50 is used for training. During the training process, we train the density parameters $\{s_k\}$ and the classifier $\mathcal{N}_\text{task}$ together for 2000 epochs. Eq~\eqref{eq:loss} is used as the loss function for this classification task, where the task loss $\mathcal{L}_\text{task}$ is defined as the binary cross-entropy loss for classification.

\begin{figure}[t]
    \centering
    \includegraphics[width=0.45\textwidth]{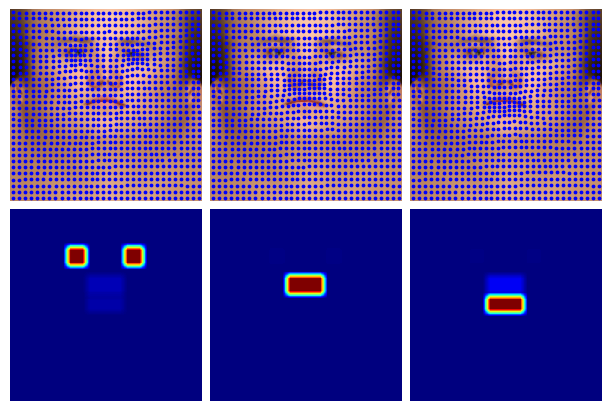}
    \caption{Craniofacial analysis visualization. The left, middle, and right columns show the results when the eyes, nose, and mouth, respectively, are the target regions. The first row illustrates the sampling locations on the input image, and the second row is the learned density heatmap.}
    \label{fig:osa_visual}
\end{figure}
\begin{table}[t]
\centering
\begin{tabular}{c|ccc|c}
\toprule
Target Region  & $s'_\text{eye}$ & $s'_\text{nose}$ & $s'_\text{mouth}$ & $r_{s'}$  \\\hline
Eye    & \bf 2.0133     & 1.0490     & 1.0360 & 1.9345     \\ 
Nose   & 1.0045 & \bf 1.8841 & 1.0000 & 1.8755 \\ 
Mouth  & 1.0046 & 1.1074     & \bf 2.0507 & 1.8518     \\\hline
\bottomrule
\end{tabular}
\caption{Quantitative results of craniofacial analysis: predicted density value in target regions (TR) and non-target regions (NTR), and the density ratio $r_{s'}$.}
\label{tb:osa_krd}
\end{table}

\begin{table*}[t]
\centering
\begin{tabular}{c|ccccc}
\toprule
Method  & Accuracy  & Sensitivity & Precision    & N.P.V.   & Specificity   \\\hline
GCNN    & 89.33     & 91.33     & 87.82     & 90.97         & 87.33\\ 
PTCNet  & 91.00     & 92.00     & 90.20     & 91.84         & 90.00\\ 
DECNN & \textbf{95.67}     & \textbf{96.00}     & \textbf{95.36}     & \textbf{95.97}         & \textbf{95.33} \\
\hline
\bottomrule
\end{tabular}
\caption{Quantitative comparison of craniofacial analysis results.}
\label{tb:osa_acc}
\end{table*}

For each predefined region, the target region corresponds to the area containing the relevant sensory organ. Since different classes in the dataset exhibit distinct structural characteristics, the learned density function is expected to emphasize regions that contain discriminative anatomical information. Table~\ref{tb:osa_krd} shows that DECNN successfully identifies these informative regions, achieving a pronounced density ratio $r_{s'}$ that clearly differentiates the target region from the remaining areas. Furthermore, as illustrated in Fig~\ref{fig:osa_visual}, the learned DE mapping enlarges informative regions, enabling more feature samples to be allocated to these areas during convolution. The density heatmaps shown at the bottom further confirm that the learned density values in the target regions are significantly higher than those in less informative regions.

Table~\ref{tb:osa_acc} presents the classification performance of DECNN in comparison with GCNN and PTCNN. Despite operating on substantially sparser sampled features, the proposed framework still achieves superior classification accuracy over the competing methods. These results demonstrate that the learned density-equalizing convolution effectively guides the network toward extracting the most discriminative information from the data, leading to more efficient and task-adaptive feature learning.

\section{Conclusion}

In this work, we introduce Density-Equalizing Convolution (DEC) for simply connected open surfaces, which leverages a learnable density function to encode spatial importance across the domain. Through theoretical analysis, we show that the associated density-equalizing mapping redistributes sampling points according to this density, such that regions with higher density values receive more samples. This result establishes that the density function serves not only as an intuitive indicator of importance but also as a mathematically grounded measure that directly governs the allocation of sampling resources.

Building upon DEC, we develop the Density-Equalizing Convolutional Neural Network (DECNN), which integrates this adaptive sampling mechanism into deep learning. By operating on a reparameterized domain induced by the density-equalizing mapping, DECNN enables convolution to focus on informative regions while reducing redundancy in less relevant areas, leading to more efficient use of model capacity.

We validate the proposed framework through experiments on both image classification and craniofacial surface analysis tasks. The results demonstrate that DECNN is able to learn meaningful importance maps, effectively concentrate sampling on task-relevant regions, and achieve competitive or superior performance with fewer parameters.

In future work, we plan to extend the proposed framework beyond simply connected surfaces to more general manifolds with complex topology, where defining global parameterizations becomes more challenging. From a modeling perspective, integrating DEC with modern neural architectures, such as attention mechanisms and transformer-based models, could further enhance its adaptability and representation power. In addition, we aim to explore broader applications of the proposed method across diverse domains, including medical image analysis, 3D shape understanding, and general computer vision tasks, where informative features are often sparsely distributed.

\section*{Declarations}
\textbf{Funding} This work was supported by HKRGC GRF (Project ID: 14306721), and Hong Kong Centre for Cerebro- Cardiovascular Health Engineering (COCHE).
\textbf{Conflict of Interest} The authors have no competing interests to declare that are relevant to the content of this article.

\bibliography{references}

\newpage
\appendix
\input{Appendix}

\end{document}

%% file: Appendix.tex
\section{Multiple Density-Equalizing Layers}
\label{app:multi_dec}
A sequence of density-equalizing convolution (DEC) layers can be shown to be equivalent to a sequence of 2D convolutional layers with a single density-equalizing transformation applied at the input stage. To formalize this idea, we first consider a network composed of multiple DEC layers.

Let $\rho^{(l)}$ denote the density function at layer $l$, which induces the density-equalizing mapping $f_{\rho}^{(l)}$. Given an input manifold signal $h_{\mathrm{in}}$, we iteratively apply DEC operations with kernels $k^{(l)}$ to compute feature maps $h^{(l)}$. After each convolution, a nonlinear activation $\sigma$ is applied. The forward propagation can be written as
\begin{equation}
\begin{aligned}
    h^{(1)} &= h_{\mathrm{in}},\\
    h^{(2)} &= \sigma\!\left(h^{(1)} \ast_{\phi, f_{\rho}^{(1)}} k^{(1)}\right),\\
    h^{(3)} &= \sigma\!\left(h^{(2)} \ast_{\phi, f_{\rho}^{(2)}} k^{(2)}\right),\\
    &\qquad \vdots\\
    h_{\mathrm{out}} &= \sigma\!\left(h^{(L)} \ast_{\phi, f_{\rho}^{(L)}} k^{(L)}\right),
\end{aligned}
\label{eq:decnn}
\end{equation}
where $\ast_{\phi, f_{\rho}^{(l)}}$ denotes the DEC operator defined in Definition~\ref{def:de_conv}, and $\sigma$ is the activation function.

In practice, an effective density function $\rho$ that represents saliency should depend solely on the original dataset and the learning objective, rather than on intermediate network outputs. Therefore, it is natural to impose the same DE mapping $f_\rho$ across all layers, 
i.e., $f_{\rho}^{(l)} \equiv f_\rho$ for all $l$. Under this assumption, and following the formulation in Eq~\eqref{eq:de_conv}, Eq~\eqref{eq:decnn} can be rewritten as
\begin{equation}
\begin{array}{rll}
    h^{(1)} &= h_{\mathrm{in}} &= \hat{h}_{\mathrm{in}} \circ f_\rho \circ \phi^{-1}, \\
    h^{(2)} &= \sigma\!\left(h^{(1)} \ast_{\phi, f_\rho} k^{(1)}\right) &= \sigma\!\left(\hat{h}^{(1)} \ast k^{(1)}\right) \circ f_\rho \circ \phi^{-1}, \\
    h^{(3)} &= \sigma\!\left(h^{(2)} \ast_{\phi, f_\rho} k^{(2)}\right) &= \sigma\!\left(\hat{h}^{(2)} \ast k^{(2)}\right) \circ f_\rho \circ \phi^{-1}, \\
    &\qquad \vdots \\
    h_{\mathrm{out}} &= \sigma\!\left(h^{(L)} \ast_{\phi, f_\rho} k^{(L)}\right) &= \sigma\!\left(\hat{h}^{(L)} \ast k^{(L)}\right) \circ f_\rho \circ \phi^{-1},
\end{array}
\label{eq:decnn_flat}
\end{equation}
where $\hat{h}^{(l)} = h^{(l)} \circ \phi \circ f_\rho^{-1}$ denotes the warped feature representation in the parameter domain.

Rearranging Eq~\eqref{eq:decnn_flat} yields
\begin{equation}
\begin{aligned}
    \hat{h}^{(2)} &= \sigma\!\left(\hat{h}^{(1)} \ast k^{(1)}\right), \\
    \hat{h}^{(3)} &= \sigma\!\left(\hat{h}^{(2)} \ast k^{(2)}\right), \\
    &\qquad \vdots \\
    \hat{h}_{\mathrm{out}} &= \sigma\!\left(\hat{h}^{(L)} \ast k^{(L)}\right),
\end{aligned}
\label{eq:decnn_flat2}
\end{equation}
which is precisely a standard sequence of 2D convolutions applied to the warped feature maps.

While Remark~\ref{rmk:dec=2d} establishes the equivalence between a single DEC and a 2D convolution on a warped feature map, Eq~\eqref{eq:decnn_flat2} further shows that a sequence of DEC layers on the manifold can be fully represented by a standard 2D CNN operating on $\hat{h}$. Consequently, repeatedly mapping features between the manifold and the parameter domain is unnecessary.

This equivalence significantly reduces computational complexity by avoiding repeated parameterization and inverse mappings. More importantly, it provides a practical way to convert any standard 2D CNN into a DEC-based network. Specifically, given a manifold signal $h$ and a 2D CNN $\mathcal{N}$, preprocessing the input as $\hat{h} = h \circ \phi \circ f_\rho^{-1}$ allows $\mathcal{N}(\hat{h})$ to approximate the output of a DEC network applied to $h$. Conversely, implementing a DEC network can be achieved by warping the input once and applying a conventional 2D CNN, as illustrated in Fig~\ref{fig:decnn}. In this implementation, pooling operations on the manifold are naturally realized as standard 2D pooling in the parameter domain.